\newtheorem{definition}{Definition}
\newtheorem{assumption}{Assumption}
\newtheorem{theorem}{Theorem}
\newtheorem{lemma}{Lemma}
\newtheorem{proposition}{Proposition}
\def\BibTeX{{\rm B\kern-.05em{\sc i\kern-.025em b}\kern-.08em
    T\kern-.1667em\lower.7ex\hbox{E}\kern-.125emX}}
\begin{document}

\title{Graph Anomaly Detection at Group Level: A Topology Pattern Enhanced Unsupervised Approach}

\author{ \IEEEauthorblockN{Xing Ai\IEEEauthorrefmark{1}, Jialong Zhou\IEEEauthorrefmark{1}, Yulin Zhu\IEEEauthorrefmark{1}, Gaolei Li\IEEEauthorrefmark{2}, Tomasz P. Michalak\IEEEauthorrefmark{3}, Xiapu Luo\IEEEauthorrefmark{1}, Kai Zhou\IEEEauthorrefmark{1}}
\IEEEauthorblockA{xing96.ai@connect.polyu.hk, jialong.zhou@connect.polyu.hk, yulin.zhu@polyu.edu.hk, gaolei\_li@sjtu.edu.cn, \\ tpm@mimuw.edu.pl, daniel.xiapu.luo@polyu.edu.hk, kaizhou@polyu.edu.hk}
\IEEEauthorblockA{\IEEEauthorrefmark{1}\textit{Department of Computing}, \textit{The Hong Kong Polytechnic University}, HKSAR}
\IEEEauthorblockA{\IEEEauthorrefmark{2} \textit{Shanghai Jiao Tong University}, Shanghai, China}
\IEEEauthorblockA{\IEEEauthorrefmark{3} \textit{University of Warsaw}, Poland}}

\maketitle
\pagestyle{plain}

\begin{abstract}
Graph anomaly detection (GAD) has achieved success and has been widely applied in various domains, such as fraud detection, cybersecurity, finance security, and biochemistry. However, existing graph anomaly detection algorithms focus on distinguishing individual entities (nodes or graphs) and overlook the possibility of anomalous groups within the graph. To address this limitation, this paper introduces a novel unsupervised framework for a new task called Group-level Graph Anomaly Detection (Gr-GAD). The proposed framework first employs a variant of Graph AutoEncoder (GAE) to locate anchor nodes that belong to potential anomaly groups by capturing long-range inconsistencies. Subsequently, group sampling is employed to sample candidate groups, which are then fed into the proposed Topology Pattern-based Graph Contrastive Learning (TPGCL) method. TPGCL utilizes the topology patterns of groups as clues to generate embeddings for each candidate group and thus distinct anomaly groups. The experimental results on both real-world and synthetic datasets demonstrate that the proposed framework shows superior performance in identifying and localizing anomaly groups, highlighting it as a promising solution for Gr-GAD. Datasets and codes of the proposed framework are at the github repository https://anonymous.4open.science/r/Topology-Pattern-Enhanced-Unsupervised-Group-level-Graph-Anomaly-Detection.
\end{abstract}

\section{Introduction}



Graph Anomaly Detection (GAD) aims to identify anomalies within graph data. Due to the recent technical advances in graph representation learning~\cite{grlsurvey1, grlsurvey2, grlsurvey3}, GAD has demonstrated significant success in various application domains, such as cybersecurity~\cite{GraphConsis, FraudNE}, finance~\cite{9762926}, and Internet of Things~\cite{sipple2020interpretable}. Deep-learning-based GAD methods can be broadly classified into three levels \cite{gadsurvey1, gadsurvey2, gadsurvey3} based on the types of anomalies being investigated: Node-level (N-GAD), Subgraph-level (Sub-GAD), and Graph-level (G-GAD). Specifically, N-GAD \cite{ONE, CAREGNN, BWGNN} and Sub-GAD focus on identifying individual nodes or a subgraph from a single graph, while G-GAD \cite{OCGTL, iGAD, TUAF} aims to distinguish anomalous graphs from a given set of graphs.






A critical application domain of GAD is finance, where various GAD methods are used to detect financial crimes such as fraud \cite{CAREGNN, DeepFD}, Ponzi schemes \cite{heterponzi}, and money laundering \cite{AMLalert, colladon2017using}. However, while recent deep-learning methods offer several advantages, they primarily concentrate on identifying anomalies at the node or graph levels. This approach is inadequate since financial crimes are typically orchestrated by criminal groups rather than individuals~\cite{smurfAML, autoaudit, antibenford}. Such criminal groups are often well-organized and exhibit specific structures. In particular, it is known~\cite{flowscope,crysmurfing} that, in a transaction graph, nodes (i.e., accounts) involved in criminal activities form groups with distinct topology patterns, such as trees comprising of a fraud leader and multiple subordinates or paths consisting of long transaction flows associated with money laundering. For example, Fig.~\ref{RealCase} illustrates a real-world money laundering group consisting of four companies,~\footnote{The full names of these companies are Hilux Services LP (HS), Polux Management LP (PM), LCM Alliance LLP (LA), and Metastar Invest LLP (MI). 
According to investigations of the Organized Crime and Corruption Reporting Project (OCCRP) 
\url{https://www.occrp.org/en/azerbaijanilaundromat/raw-data/.} 
}
presenting a path of four nodes. 

\begin{figure}[!h]
\centering
\includegraphics[width=0.24\textwidth]{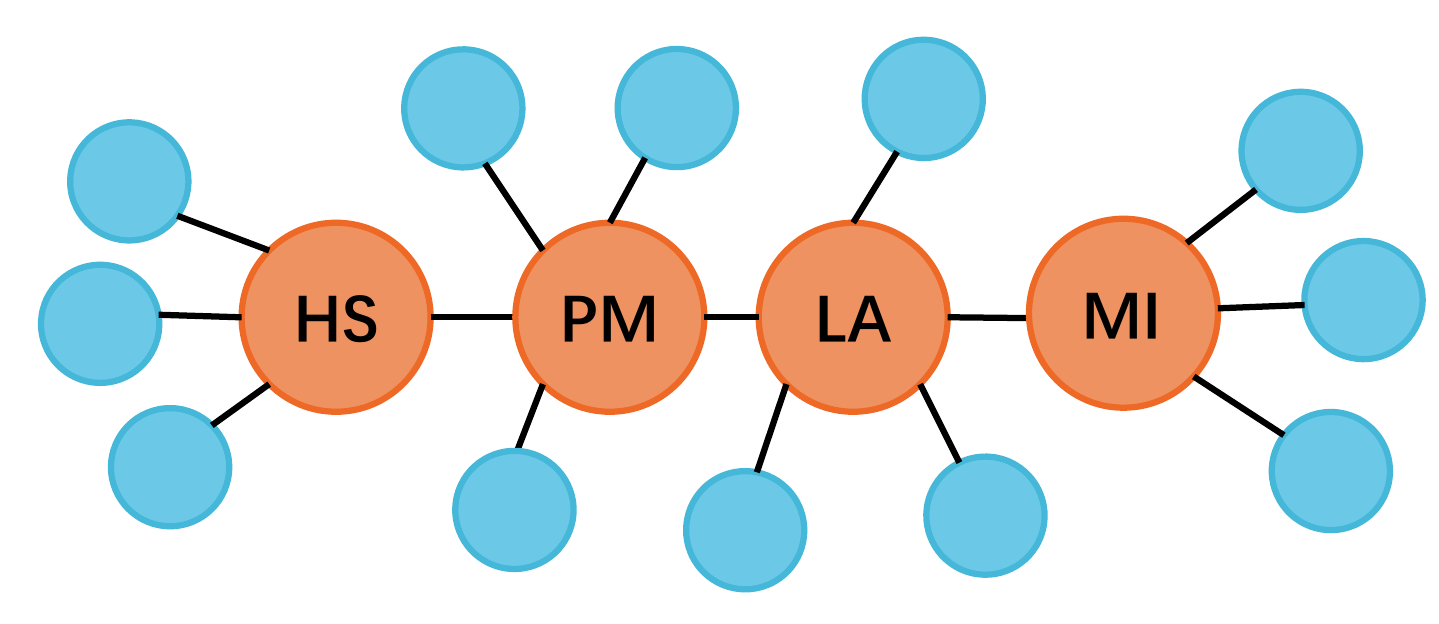}
\caption{A real case of four companies involved in money laundering in Azerbaijan forming a path-like topology pattern. }
\label{RealCase}
\end{figure}



Unfortunately, existing GAD methods (in particular, N-GAD and Sub-GAD) have their own limitations and are thus incapable of detecting anomaly groups with specific topology patterns. Specifically:
\begin{itemize}[leftmargin=5pt]
    \item N-GAD methods treat nodes independently, which neglects topology pattern information. However, existing research~\cite{flowscope, autoaudit} has demonstrated that the topology patterns exhibited by groups are strongly associated with their function or behavior, such as the well-known ``smurfing" structure in money laundering~\cite{crysmurfing}. Therefore, failing to leverage topology information significantly impedes the detection of anomalies at the group level.
    \item N-GAD methods are incapable of detecting anomaly nodes that are deeply embedded within a group. In essence, N-GAD is based on the assumption that abnormal nodes are distinct from their one-hop neighbors. However, nodes deep inside an anomalous group may resemble their one-hop or two-hop neighbors (i.e., internal nodes of the group), while differing from long-range nodes (i.e., external nodes of the group), giving rise to what we refer to as ``long-range inconsistency." We will provide a detailed demonstration of this inefficacy of N-GAD later.
\end{itemize}

It is worth noting that a few emerging Sub-GAD studies~\cite{DeepFD, FraudNE, AS-GAE}
have been developed to detect anomalous subgraphs within a given graph. However, we emphasize that these studies generally utilize N-GAD as the primary component to detect anomalous nodes and assume that the detected nodes constitute an anomalous subgraph. As a result, Sub-GAD shares the same limitations as N-GAD, as outlined above. Lastly, G-GAD aims to distinguish whether an entire graph is anomalous, which addresses a distinct problem from ours.



Given the insufficiency of current GAD approaches, we introduce a new task called Group-level Graph Anomaly Detection (\textbf{Gr-GAD}). Specifically, \textbf{Gr-GAD} aims to address the following problem: give a graph, identify a set of anomalous groups and assign an anomaly score to each group.



Tackling the \textbf{Gr-GAD} task presents several significant challenges. Firstly, the number of groups increases exponentially with the size of the graph, making it computationally infeasible to examine every potential group. Therefore, reducing the number of candidate groups is critical for the efficiency of \textbf{Gr-GAD}. Secondly, effectively utilizing topology patterns information for group detection is an important issue. By fully utilizing the distinctive topology patterns displayed by financial criminal groups, some experience-driven or expert knowledge-based manual methods \cite{flowscope, smurfAML, crysmurfing, antibenford} have successfully located criminal groups. This demonstrates that topology patterns can serve as clues for detecting anomalous groups, thereby enhancing the performance of \textbf{Gr-GAD}. How to universally capture the underlying connections between topology patterns and anomaly groups 
is of paramount importance.

To address the above challenges, we propose a novel unsupervised \textbf{Gr-GAD} framework combining Graph AutoEncoder (GAE) and Graph Contrastive Learning (GCL), namely \textbf{T}opology \textbf{P}attern Enhanced Unsupervised  \textbf{Gr}oup-level \textbf{G}raph \textbf{A}nomaly \textbf{D}etection (TP-GrGAD).

Specifically, we employ the proposed Multi-Hop Graph AutoEncoder (MH-GAE) to detect anchor nodes that potentially belong to anomaly groups. Subsequently, the group sampling phase is initiated, where candidate groups are sampled starting from the anchor nodes, effectively reducing the number of groups that need to be identified. The Topology Pattern Enhanced Graph Contrastive Learning (TPGCL), introduced in this paper, operates on the candidate groups as input and utilizes two novel augmentations to perturb the topology patterns. This process generates embeddings containing topology pattern information for each candidate group, which are then fed into the anomaly detector for anomaly scoring. The main contributions of this paper are summarized as follows.



\begin{itemize}[leftmargin=*]
\item We initiate the study of a new task in the realm of anomaly detection: Group-level Graph Anomaly Detection (Gr-GAD). This task focuses on identifying anomalies at the group level by leveraging topology pattern information, and it has significant applications, particularly in the financial sector.
\item We propose an unsupervised framework comprising a Multi-Hop Graph AutoEncoder (MH-GAE) that can effectively capture long-range inconsistencies and a Topology Pattern-based Graph Convolutional Learning (TPGCL) that can extract latent correlations between an anomaly group and its topology patterns. The proposed framework can detect anomaly groups with arbitrary sizes without any labels.

\item In order to provide insight into the proposed Topology Pattern-based Graph Convolutional Learning (TPGCL), we offer a comprehensive theoretical analysis from the perspective of Graph Information Bottleneck (GIB), which serves to substantiate its effectiveness.

\item Experiments conducted on both real-world datasets and synthetic datasets demonstrate that the proposed framework achieves substantial performance improvements in comparison to the existing N-GAD and Sub-GAD methods. 
\end{itemize}

The remainder of this paper is organized as follows. Sec.~\ref{related work} and Sec.~\ref{Preliminaries} provide overviews of the related work and background. 
Sec.~\ref{Problem Statement} defines the primary research problem of this paper: Gr-GAD. Sec.~\ref{Methodology} comprehensively introduces the proposed framework, and Sec.~\ref{Theoretical Supports} provides theoretical analysis. Sec.~\ref{Experiments} describes our experimental setting and demonstrates empirically the performance of the proposed framework. Finally, Sec.~\ref{Conclusion} concludes the paper.

\section{Related Work}\label{related work}



\subsection{Graph Anomaly Detection}

Graph anomaly detection is rooted in the body of research on general anomaly detection, excellent overviews of this literature can be found in the works by Chandola et al.~\cite{chandola2009anomaly}, Shubert et al.~\cite{schubert2014local}, Wang et al.~\cite{wang2019progress}, Boukerche et al.~\cite{boukerche2020outlier}, and, recently, Samariya et al.~\cite{samariya2023comprehensive}. In turn, a well-known survey by Akoglu et al.~\cite{gadsurvey1} focuses on classic (non-deep learning) techniques for graph anomaly detection. Nevertheless, with the introduction of various advanced deep learning technologies, such as Graph Neural Networks (GNN) and Graph AutoEncoder (GAE), a lot of graph anomaly detection methods achieve state-of-the-art in different application scenarios. For example, Tang et al.~\cite{BWGNN} find the ‘right-shift’ phenomenon of spectral energy distributions when applying GNN to anomaly detection and propose Beta Wavelet Graph Neural network (BWGNN), which is a better spectral to capture graph anomaly. Liu et al.~\cite{CoLA} indicate GAE can not exploit the rich local information well since its learning objective is reconstructing the whole graph, and propose a contrastive self-supervised learning framework. Unlike widely researched node-level anomaly detection, Chen et al.~\cite{OCGTL} notice the graph-level anomaly detection and propose a novel graph-level method based on One-Class Classification (OCC).

Compared to well-studied node-level/graph-level anomaly detection, Subgraph Anomaly Detection (Sub-GAD) receives much less attention. We currently know three deep learning-based subgraph anomaly detection methods: DeepFD~\cite{DeepFD}, FraudNE~\cite{FraudNE} and AS-GAE~\cite{AS-GAE}, which employ GAE as the backbone to detect anomalous nodes first and then extract anomalous subgraph from anomalous nodes via clustering or connected component detection. Overall, these methods generally follow the style of node-level anomaly/outlier detection, locating anomalous nodes by measuring the consistency of a node with its one-hop neighbors. 


\subsection{Graph Contrastive Learning}

Since the seminal work in \cite{SimCLR}, a diverse set of contrastive learning models have been proposed and achieved impressive results in various domains. GRACE~\cite{GRACE} introduced contrastive learning into the graph domain. It perturbs the input graph randomly, such as edge removal or feature masking, to generate views and maximize the mutual information between the input graph and views. Assuming a graph and nodes within a graph should share similar representations, MVGRL~\cite{MVGRL} proposes a local-global contrastive loss that compares the similarity between nodes' representation and the whole graph's. BGRL~\cite{BGRL}  predicts possible views by introducing bootstrapped graph latent training. Following Graph Information Bottleneck (GIB)~\cite{GIB}, AD-GCL~\cite{AD-GCL} and VGIB~\cite{VGIB} aim to decrease the mutual information between the input graph and its representation while increasing the mutual information between the graph's representation and labels or downstream tasks, thereby reducing irrelevant redundant information with the labels. To achieve this, AD-GCL~\cite{AD-GCL} introduces adversarial training, while  VGIB~\cite{VGIB} incorporates variational approaches.

\section{Preliminaries}\label{Preliminaries}

\subsection{GAE for Node-level Anomaly Detection}
The core assumption of GAE-based node-level anomaly detection is that anomalous nodes in a graph often display inconsistent structural or attribute characteristics compared to their neighbors~\cite{ONE, DONE}. Specifically, if a node is linked to nodes belonging to different classes or communities than itself, it is deemed a structural anomaly. Conversely, if a node's attributes resemble those of nodes from different classes or communities, it is considered an attribute anomaly.

GAE-based anomaly detection utilizes a \textit{reconstruction error} to quantify such a local inconsistency. Specifically, a GNN-based encoder $f_{enc}$ is used to generate low-dimensional embeddings for nodes: $z = f_{enc}(A, X)$, where $A$ and $X$ are the adjacency and feature matrix of the graph, respectively. Then, from the generated embeddings $z$, a decoder $f_{dec}$ is used to reconstruct the graph structure $A^{'}$ and attributes $X^{'}$: $A^{'}, X^{'} = f_{dec}(z)$.



For an arbitrary node $i$ in the graph, its reconstruction error $r_i$ is a weighted sum of a structure error $r_{stru}$ and an attribute error $r_{attr}$:
\begin{equation}\label{GAE stru error}
    \begin{aligned}
        & r_i = \lambda \cdot r_{stru} + (1-\lambda) \cdot r_{attr}, \\
        & r_{stru} = \sum_{j\in N(i)}||A_{ij}-A^{'}_{ij}||, 
        & r_{attr} = ||x_i-x^{'}_i||,\\
    \end{aligned}
\end{equation}
where $N(i)$ is the neighbor set of node $i$ (not including $i$) 
and $\lambda \in [0, 1]$ is a hyperparameter that tunes the relative importance between the structure and attribute error.
The sum of the reconstruction errors of all nodes $\mathcal{L} = \sum_{i\in V}r_i$ is then minimized to jointly train the encoder and decoder. 
After training, the nodes with larger reconstruction errors (e.g., exceeding a threshold value $\tau$)  are identified as anomalous. 

\subsection{Graph Contrastive Learning}




Recently, several unsupervised graph learning methods have emerged, and Graph Contrastive Learning (GCL) is a powerful and widely used approach among them. The GCL model generates embeddings for downstream tasks in the absence of available labels. Early GCL is founded on maximizing mutual information (InfoMax)~\cite{InfoMax}. This is achieved by generating multiple views of an input graph through augmentations and maximizing the mutual information between these views and the input graph, as the training objective. The common augmentations employed in GCL involve perturbations to nodes and edges, such as adding or removing nodes/edges.

Recent GCL methods such as AD-GCL~\cite{AD-GCL} and VGIB~\cite{VGIB} follow the Graph Information Bottleneck (GIB) principle. They aim to simultaneously maximize the mutual information between the input graph's encodings and labels while minimizing the mutual information between the encodings and the input graph. This can be expressed in the following equation:
\begin{equation} \label{IB}
    \max_\theta I(Y; f(G)) - I(G; f(G)),
\end{equation}
where $G$ and $Y$ are input graphs and labels respectively and $f$ is a model such as GNN. Due to the unavailability of labels $Y$, AD-GCL~\cite{AD-GCL} provides the bound of GIB and minimizes Eqn.~\eqref{IB} by introducing regularization into the objective function.





\section{Problem Statement}\label{Problem Statement}
In this section, we formally define the problem of Group-level Graph Anomaly Detection (\textbf{Gr-GAD}) and differentiate it from Sub-GAD. 

We denote a graph as $\mathcal{G}=(\mathbf{V}, \mathbf{E})$, where $\mathbf{V}$ is the set of nodes and $\mathbf{E}$ is the set of edges, and all the node attributes are summarized as a matrix $X$. 


\begin{definition}
    Given a graph $\mathcal{G}=(\mathbf{V}, \mathbf{E})$ and threshold $\tau$ as input, a \textbf{Gr-GAD} method can be represented as a function $F$ that outputs a set of groups $\mathcal{C}$ associated with anomaly scores $\mathcal{S}$: $F(\mathcal{G}) \rightarrow \{\mathcal{C}, \mathcal{S}\}$, where $\mathcal{C}$ and $\mathcal{S}$ fulfill: 
    \begin{equation*}
    \begin{aligned}
            a) \quad \mathcal{C}= & \{c_{1}=(\mathbf{V}_1, \mathbf{E}_1), \cdots,c_{m}=(\mathbf{V}_m, \mathbf{E}_m)\}, \\
            & \quad \mathbf{V}_{i} \subseteq \mathbf{V}, \mathbf{E}_{i} \subseteq \mathbf{E}, \quad  \forall c_i, c_j\in \mathcal{C}, \\
            b) \quad \mathcal{S}= & \{s_1, s_2, \cdots, s_m\}, \quad s_i > \tau, \ \forall s_i \in \mathcal{S},\\
    \end{aligned}
    \end{equation*} 
where $s_i$ is the anomaly score associated with the group $c_i$.
\end{definition}

A good function $F$ should output relatively high anomaly scores for those identified groups. Thus, we assume that each anomaly score $s_i$ should exceed a threshold value $\tau$. We also emphasize that there is no label information provided as input. This paper focuses on unsupervised \textbf{Gr-GAD}.



It is worth noticing the distinction between \textbf{Gr-GAD} and Sub-GAD. In particular, Sub-GAD outputs a subgraph $\mathcal{G}_{sub} = \{\mathbf{V}_{sub}, \mathbf{E}_{sub}\}$ induced by the node set $\mathbf{V}_{sub}$, together with an anomaly score $s_i >\tau$ for \textit{each node} $v_i \in \mathbf{V}_{sub}$. Thus, there are two major distinctions between \textbf{Gr-GAD} and Sub-GAD:
\begin{itemize}[leftmargin=*]
    \item Sub-GAD produces anomaly scores at the node level and the identified anomalous nodes constitute a single anomalous subgraph. In comparison,  \textbf{Gr-GAD} assigns anomaly scores at the group level, that is, assigning anomaly scores to groups instead of nodes.
    \item In the anomalous subgraph $\mathcal{G}_{sub}$ produced by Sub-GAD, \textit{every} node is anomalous from the perspective of the node level (i.e.,  $s_i >\tau, \forall v_i \in \mathbf{V}_{sub}$). In contrast, nodes in an anomalous group identified by \textbf{Gr-GAD} may be normal at the node level. That is, \textbf{Gr-GAD} distinguishes an anomalous group from others at the group level, whereas Sub-GAD identifies a subgraph comprised of nodes that are deemed abnormal at the individual node level.
\end{itemize}


\begin{figure*}[!t]
    \centering
    \includegraphics[scale=0.35]{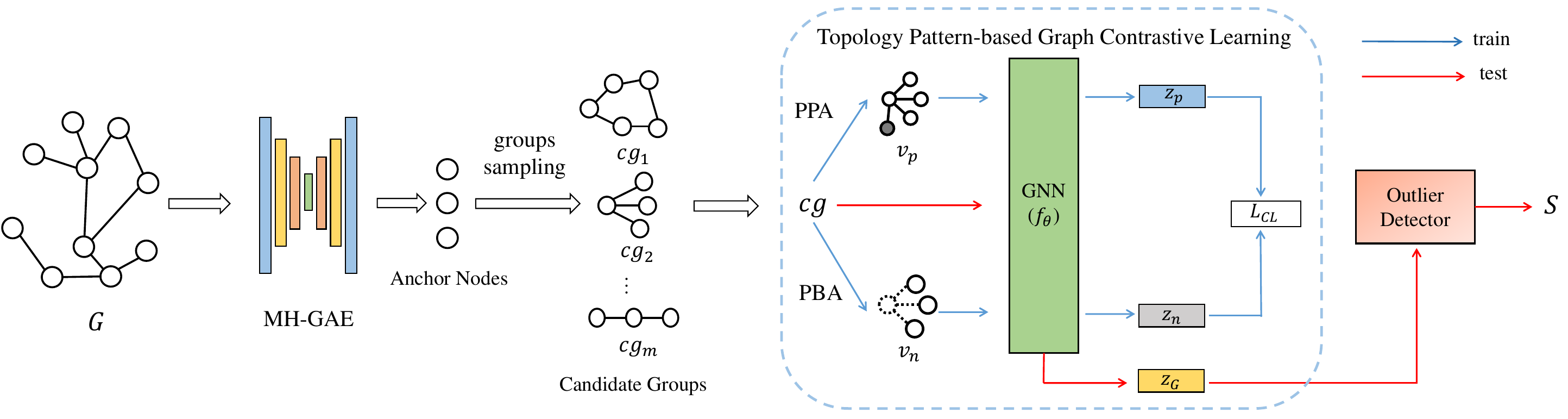}
    \caption{Framework of the TP-GrGAD.}
    \label{Framework}
\end{figure*}


\section{Methodology}\label{Methodology}
In this section, we introduce our proposed model \textbf{TP-GrGAD} to address the task of \textbf{Gr-GAD}. We provide an overview (Sec.~\ref{sec-overview}), followed by detailed explanations of three components: anchor node location via Multi-Hop GAE (Sec.~\ref{MH-GAE section}), candidate groups sampling (Sec.~\ref{sec-sample}), and candidate group discrimination via Topology Pattern-based Graph Contrastive Learning (TPGCL) (Sec.~\ref{sec-gcl}).

\subsection{Overview}
\label{sec-overview}


Directly identifying anomalous groups becomes challenging as the expansive time complexity. To tackle this issue, we adopt a three-step approach that first identifies a set of anchor nodes as potential anomalies. Subsequently, we sample and distinguish groups based on these anchor nodes. The overall framework of our approach \textbf{TP-GrGAD} is presented in Fig.~\ref{Framework}.

\textbf{TP-GrGAD} is composed of three primary components: anchor node localization, candidate group sampling, and candidate group classification.  
To overcome the limitations of GAE, we introduce a novel variant called Multi-Hop Graph AutoEncoder (MH-GAE), which can more accurately detect nodes within anomalous groups by capturing \textit{long-range inconsistencies}. 
We then propose a group sampling algorithm that starts from the anchor nodes to sample candidate groups. 
Finally, given the sampled candidate groups, we propose a novel GCL framework termed  Topology Pattern-based Graph Contrastive Learning (TPGCL) that is capable of capturing topology pattern information by comparing the positive sample ($v_p$ in Fig.~\ref{Framework}) and the negative sample ($v_n$ in Fig.~\ref{Framework}). This enables TPGCL to effectively generate embeddings for the candidate groups, which are then used to identify the presence of anomalous groups via unsupervised outlier detectors such as SUOD \cite{SUOD} and ECOD \cite{ECOD}.



\subsection{Anchor Node Localization by MH-GAE}
\label{MH-GAE section}
To address the challenge of the exponential growth of possible anomalous groups with the increasing graph size, we identify a set of anchor nodes via the proposed Multi-Hop Graph AutoEncoder (MH-GAE), and subsequently sample groups based on these anchor nodes. Below, we begin with a discussion on the limitation of vanilla GAE:

\subsubsection{Limitation of GAE}


Despite the impressive performance achieved by Graph AutoEncoder (GAE) and its variants in unsupervised anomaly detection, it is important to note these methods face limitations when it comes to detecting anomalous groups. Specifically, some anomaly nodes are consistent with their neighbors within the same group but exhibit inconsistency with other long-range nodes outside the group, which we term as \textbf{long-range inconsistency}. Unfortunately, existing GAE-based anomaly detection methods fail to capture such a long-range inconsistency and thus fail to detect group-level anomalies, as demonstrated in Fig.~\ref{example of long range}. In this figure, a given graph contains three anomaly groups denoted in three colors. Although a typical GAE-based node anomaly detection method, DOMINANT~\cite{dominant}, can detect most of the anomalous nodes (shown in red), it fails to identify a few nodes deep inside the anomaly groups (blue, orange, and green in  Fig.~\ref{dominant example}).

\begin{figure}[!ht]
\centering
\begin{subfigure}{0.23\textwidth}
    \centering
    \includegraphics[width=3.2cm, height=2.5cm]{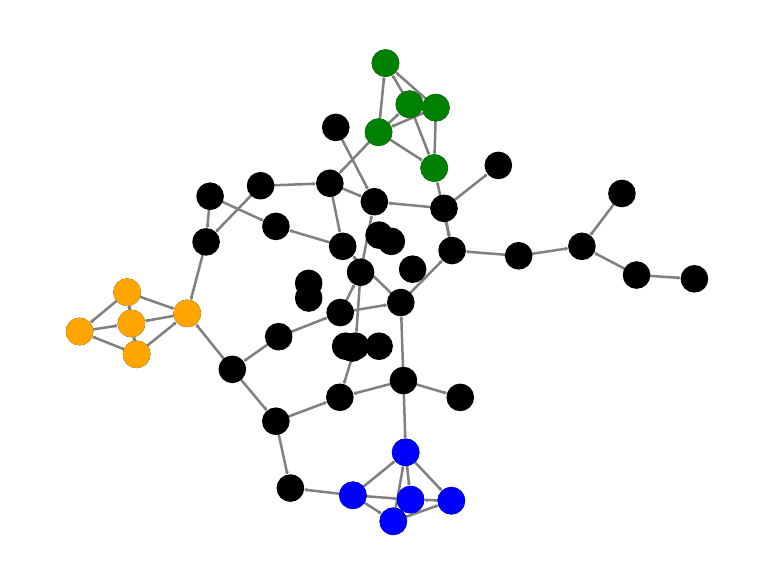}
    \caption{}
\end{subfigure}\hspace{2mm}
\begin{subfigure}{0.23\textwidth}
    \centering
    \includegraphics[width=3.2cm, height=2.5cm]{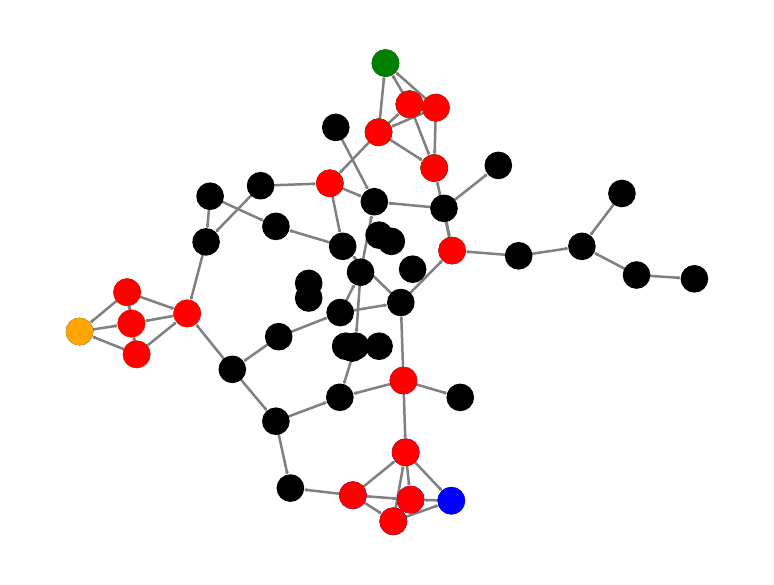}
    \caption{}
    \label{dominant example}
\end{subfigure}
\caption{(a) An example graph contains three anomaly groups, highlighted with green, orange and blue. (b) Predicted anomaly nodes by DOMINANT~\cite{dominant}(red color).}
\label{example of long range}
\end{figure}

This limitation can be primarily attributed to GAE's reconstruction objective, which is the adjacent matrix $A$, as shown in Eqn.~\eqref{GAE stru error}. GAE-based anomaly detection methods emphasize measuring the dissimilarity between nodes and their immediate one-hop neighbors. As a result, identifying nodes that are deeply embedded within an anomalous group, which exhibits similarities to their one-hop neighbors, can pose a challenge for detection using GAE.


\subsubsection{Multi-Hop GAE}
To address the limitation, we propose a novel  Multi-Hop Graph AutoEncoder (MH-GAE), which focuses on the inconsistencies between nodes and their multi-hop neighbors, thereby capturing long-range inconsistencies.

To this end, we explore two approaches. A naive approach is replacing the adjacent matrix $A$ in the objective function (Eqn.~\eqref{GAE stru error}) with a standardized multi-hop matrix such as $A^2$, $A^3$, et al. For a $k$-hop matrix $A^k$, which is standardized $k-$th power of $A$, the reconstruction error of node $i$ is:

\begin{equation} \label{MH-GAE stru error}
    r_{stru} = \sum_{j\in \mathbf{V}}||A^{k}_{ij}-{A^{'}_{ij}}^{k}||.
\end{equation}
Eqn.~\eqref{MH-GAE stru error} regards $A^k$ as the reconstruction objective and asks the model to reconstruct not only the substructure within one-hop neighbors but also within $k-$hop neighbors, thus capturing long-range inconsistency and aware nodes deep in a group that is neglected by vanilla GAE. 

The lack of flexibility is an important consideration in this approach due to the challenge of determining the appropriate value for the power and the computational cost associated with computing the $k$-th power of matrix $A$. Therefore, an alternative approach is recommended, which involves replacing the adjacency matrix $A$ with the weighted adjacency matrix $\tilde{A}$ proposed in GraphSNN \cite{GraphSNN}:

\begin{equation} \label{weighted matrix}
    \begin{aligned}
        & \tilde{A}_{v \mu}=\frac{\left|\mathbf{E}_{v u}\right|}{\left|\mathbf{V}_{v \mu}\right| \cdot\left|\mathbf{V}_{v \mu}-1\right|} \cdot\left|\mathbf{V}_{v \mu}\right|^\lambda \\
        & S_{v \mu}=\left(\mathbf{V}_{v \mu}, \mathbf{E}_{v \mu}\right)=S_v \cap S_\mu,(v, \mu) \in \mathbf{E}.
    \end{aligned}
\end{equation}

For a given graph $\mathcal{G}=(\mathbf{V}, \mathbf{E})$ with node set $\mathbf{V}$ and edge set $\mathbf{E}$, $S_v$ and $S_\mu$ are neighborhood subgraphs of nodes $v$ and $\mu$ respectively. $S_{v \mu}$ is overlap subgraph of $S_v$ and $S_\mu$, where $\mathbf{V}_{v \mu}\subseteq \mathbf{V}$ and $\mathbf{E}_{v \mu}\subseteq \mathbf{E}$. $\lambda$ is hyperparameter. Wijesinghe et al. \cite{GraphSNN} has demonstrated $\tilde{A}$ can help GNN learn the local structural information of a node to achieve powerful performance as higher-order Weisfeiler-Lehman (WL) test \cite{WL}, which can capture information beyond one-hop neighbors of a node and thus capture long-range inconsistency. 

We compare the above two approaches by evaluating the performance of numerical experiments in Sec.~\ref{Experiments}. Considering effectiveness, efficiency, and flexibility, we select $\tilde{A}$ as the reconstruction objective for MH-GAE.

\subsection{Candidate Group Sampling}
\label{sec-sample}
\begin{algorithm}[tb]
\caption{Candidate Groups Sampling}
\begin{algorithmic}[1]
    \STATE{\bfseries Input}: Input graph $\mathcal{G}=(\mathbf{V}, \mathbf{E})$, anchor node set $AN$, hyperparameter for tree depth $t$.
    \STATE{\bfseries Output}: Candidate group set $CG$
    
    \FOR{$v \in AN$}
        \FOR{$\mu \in AN$}
            \STATE $path_{v,\mu}$ = PathSearch($v, \mu, \mathbf{V}, \mathbf{E}$)
            \STATE add $path_{v,\mu}$ into $CG$
            \STATE $tree_{v,\mu}$ = TreeSearch($v, \mu, \mathbf{V}, \mathbf{E}, t$)
            \STATE add $tree_{v,\mu}$ into $CG$
        \ENDFOR
        \STATE $cycle_{v}$ = CycleSearch($v, \mathbf{V}, \mathbf{E}$)
        \STATE add $cycle_{v}$ into $CG$
    \ENDFOR

    \STATE return $CG$
\end{algorithmic}
\label{Candidate Groups Sampling}
\end{algorithm}

Starting from the anchor nodes identified by MH-GAE, we perform candidate group sampling to obtain potential anomaly groups for further distinguishing.

\subsubsection{Sampling algorithm}
In order to comprehensively capture potential anomaly groups in the graph, we employ three types of pattern search algorithms: path search, tree search, and cycle search, corresponding to Line 5, Line 7, and Line 10 in Alg.~\ref{Candidate Groups Sampling}, respectively. These three types of patterns (path, tree, and cycle) are fundamental building blocks of numerous complex patterns, making them essential for detecting various structures of anomaly groups in the graph. For instance, triangular and diamond patterns can be reduced to cycles, while the star pattern can be seen as a tree.


The path search algorithm is responsible for identifying linear sequences of nodes between two arbitrary anchor nodes $v$ and $\mu$, which can reveal the existence of simple and direct connections between nodes, such as sequential transactions or relationships. The tree search algorithm aims to detect hierarchical structures between anchor nodes $v$ and $\mu$, indicating the presence of branching relationships or nested dependencies in the graph. The cycle search algorithm finds closed loops or cycles which contain anchor node $v$, which may indicate recurring or cyclical patterns of interactions or transactions. Note the overlapping and repetitive patterns obtained through sampling contribute to increasing the number of candidate groups input to TPGCL, thereby enhancing the model's performance.

There are various implementations for pattern search. Considering efficiency, we choose Bellman-Ford algorithm \cite{pathsampling1,pathsampling2}, Breadth-First Search, and a proposed algorithm by Etienne et al.~\cite{cyclesampling} for path, tree, and cycle search respectively. 

\subsubsection{Time complexity}
Suppose the anchor node set $AN$ contains $m$ anchor nodes, due to the double traversal of anchor nodes in Lines 3-4 of Alg.~\ref{Candidate Groups Sampling}, $m^2$ times path searches and tree searches are performed, along with $m$ times cycle searches. The time complexity of both each path search and tree search is $O(|\mathbf{V}|+|\mathbf{E}|)$ \cite{pathsampling1,pathsampling2}, and each cycle search has a time complexity of $O((c+1)(|\mathbf{V}|+|\mathbf{E}|))$ \cite{cyclesampling}, where c is the number of cycles present in the graph. Therefore, the time complexity of candidate group sampling is $O(m(2m+c+1)(|\mathbf{V}|+|\mathbf{E}|))$. Since $c$ is often relatively small and can be neglected, the time complexity is $O(2m^2(|\mathbf{V}|+|\mathbf{E}|))$. Due to the rarity of anomaly groups, we select a small number of nodes as anchor nodes in the experiments, resulting in a small value for $m$. As a result, the overall time complexity is close to $O(|\mathbf{V}|+|\mathbf{E}|)$, which is acceptable for most datasets.

\subsection{Topology Pattern-based Graph Contrastive Learning}
\label{sec-gcl}




After identifying the candidate groups, we propose a novel variant of GCL, called Topology Pattern-based Graph Contrastive Learning (TPGCL). TPGCL incorporates specially crafted augmentations to capture topology pattern information, resulting in improved detection of anomalous groups. Compared to the common design of GCL, our approach differs in two key aspects: the way of generating augmented views and the construction of the training objective function.


\subsubsection{Topology pattern aware augmentations}

Our design relies on a critical assumption: criminal/anomaly groups often possess some specific topology patterns, which are commonly observed in various real-world scenarios. For example, the money laundering group tends to present a path-like structure \cite{flowscope, crysmurfing} and the Ponzi group usually has a tree-like structure~\cite{Ponzitree}.

The widely used augmentations, such as node/edge removing and feature masking, generate views by randomly perturbing the structure or node features of a graph. However, we point out these augmentations are not suitable for Gr-GAD because these augmentations do not consider the potential impact on the topological structure that is highly correlated with the labels. For example, node removal may delete the root node of a tree, which corresponds to the leader of a criminal gang in the real world. On the other hand, edge removal may delete a specific edge in a money laundering chain, causing the entire chain to break. The occurrence of such situations can potentially impact the performance of group detection, making it difficult to distinguish anomalous groups. 

Thus, to capture the topology pattern information, we introduce two types of topology augmentations:  Pattern Preserving Augmentation (\textbf{PPA}) and Pattern Breaking Augmentation (\textbf{PBA}).  Specifically, PPA will preserve the intrinsic topology patterns in a candidate group while PBA will break those patterns. We term the views generated by PPA (PBA) as positive (negative) views.  The key intuition is that the negative view will share less mutual information with an anomalous group by breaking the intrinsic topology patterns while the positive view will share more, which will facilitate our construction of the training objective function (detailed later).

Overall, PPA and PBA have two plausible features compared to previous widely used augmentations:

\begin{itemize}[leftmargin=*]
\item \textbf{Topology pattern oriented}. Unlike previous augmentations ignoring the possible impact on the local structures and topology patterns. PPA and PBA perturb groups from the perspective of a whole topology pattern. 
\item \textbf{Reduced randomness}. Previous augmentations usually make random perturbations, while PPA and PBA have prescribed impact (break or preserve) on the topology patterns. 
\end{itemize}





\begin{algorithm}[tb]
\caption{Topology Pattern Augmentations}
\begin{algorithmic}[1]
    \STATE{\bfseries Input}: Candidate group $g\in CG$
    \STATE{\bfseries Output}: Positive view $g_{p}$ and negative view $g_{n}$ of the input candidate group $g$.

    \STATE Initialize empty sets for tree, path and cycle: $Tree$, $Path$, $Cycle$. Initialize positive view $g_{p}$ and negative view $g_{n}$ as copies of the $g$.
    \STATE $Tree$, $Path$, $Cycle$ = TopologyPatternSearching($g$)
    \FOR{$T \in Tree$}
        \STATE locate the root node $r$ of $T$
        \STATE For $g_{n}$: Dropping the root node $r$.
        \STATE For $g_{p}$: Adding a new child $c$ to the root node $r$, the node attribute of new child is the average node attribute of other children. 
    \ENDFOR
    \FOR{$P \in Path$}
        \STATE locate the endpoint node $e$ of $P$ and the middle node $m$ of $P$.
        \STATE For $g_{n}$: Dropping the middle node $m$.
        \STATE For $g_{p}$: Adding a new neighbor node $n$ to the endpoint node $e$, the node attribute of new node is the average node attribute of other nodes in the path. 
    \ENDFOR
    \FOR{$C \in Cycle$}
        \STATE random chose two nodes $n_{1}, n_{2}$ of $C$.
        \STATE For $g_{n}$: Dropping $n_{1}, n_{2}$.
        \STATE For $g_{p}$: Adding a new node $n^{'}$ both link to $n_{1}, n_{2}$, the node attribute of new node is the average node attribute of other nodes in the cycle. 
    \ENDFOR

    \STATE return $g_{n}, g_{p}$
\end{algorithmic}
\label{Topology Pattern Contrastive Learning}
\end{algorithm}

To implement these two augmentations, topology pattern searching is first applied (Line 4 in Alg.~\ref{Topology Pattern Contrastive Learning}). For each candidate group, the topology pattern searching finds three patterns within this group: Tree, Path and Cycle. The reason for choosing these three patterns is that they represent the different basic classes of substructure, and other topology patterns such as triangles, diamonds, ad stars can be regarded as their different specific cases. The implementation of topology pattern searching is referred to Lines 5-10 of Alg.~\ref{Candidate Groups Sampling}.

After identifying three patterns in a candidate group, we break the patterns to generate the negative view and expand patterns in the positive view. The detailed process is shown in Lines 5-19 in Alg.~\ref{Topology Pattern Contrastive Learning}.
Specifically, for generating the negative view, we remove root nodes of trees (Line 7 in Alg.~\ref{Topology Pattern Contrastive Learning}), break paths by dropping the middle nodes of paths (Line 12 in Alg.~\ref{Topology Pattern Contrastive Learning}), and break cycles by dropping arbitrary two nodes (Line 17 in Alg.~\ref{Topology Pattern Contrastive Learning}). For the generation of the positive view, we expand and maintain intrinsic topology patterns simultaneously. That is, adding new child nodes to trees (Line 8 in Alg.~\ref{Topology Pattern Contrastive Learning}), prolonging paths by adding new nodes (Line 13 in Alg.~\ref{Topology Pattern Contrastive Learning}), and extending cycles by adding new nodes (Line 18 in Alg.~\ref{Topology Pattern Contrastive Learning}).

\subsubsection{Construction of learning objective}
Besides the augmentation process, another crucial ingredient in contrastive learning lies in the design of the contrastive loss function. In this section, we construct a specialized objective function that can explicitly take into account the topology pattern information.

Specifically, we begin by constructing a \textit{labeled version} of the contrastive loss (Eqn.~\eqref{target with label}) following the principle of Graph Information Bottleneck (GIB)~\cite{GIB}.  Since the labels are not available in the context of unsupervised learning, we eliminate the labels by leveraging the mutual information among the embeddings of positive and negative views, obtaining a \textit{label-free} objective (Eqn.~\eqref{eqn-label-free-obj}).  Finally, to make this objective easily computable, we further adopt a technique to efficiently estimate the mutual information, resulting in the final objective (Eqn.~\eqref{final loss function}) utilized in our method. \textit{To ensure better readability, we introduce the detailed construction below while deferring all the theoretic analysis in Section~\ref{Theoretical Supports}}.

Given a set of candidate groups $\mathcal{C} = \{c_{1}, c_{2}, \cdots, c_{m}\}$, we use the previous topology pattern aware augmentations (i.e., \textbf{PPA} and \textbf{PBA}) to generate
positive views $\mathcal{C}_p = \{c_{p1}, c_{p2}, \cdots, c_{pm}\}$ and negative views $\mathcal{C}_n = \{c_{n1}, c_{n2}, \cdots, c_{nm}\}$, respectively. 
Let $Y=\{0, 1\}^{m}$ be the set of labels corresponding to each group. $m$ is the number of candidate groups. 
Our goal is to train a model $f_\theta$ parameterized by $\theta$ that can effectively generate embeddings for groups in $\mathcal{C}$ without using the label information $Y$.

As indicated by \citeauthor{GIB}~\cite{GIB}, representation learning on graph-structured data should obey the principle of Graph Information Bottleneck (GIB): 
\begin{equation}\label{GIB}
    \max_\theta I(Y; f_\theta(\mathcal{C})) - I(\mathcal{C}; f_\theta(\mathcal{C})),
\end{equation}
where $I(\cdot ; \cdot)$ is mutual information, $f_\theta$ is a learnable model with parameters $\theta$ such as GNN, $\mathcal{C}$ and $Y$ are input graphs and corresponding labels. GIB aims at maximizing the mutual information between embeddings and labels while minimizing the mutual information between embeddings and inputs. 

Following this intention of GIB, we construct an objective function based on the positive and negative views, assuming that the labels $Y$ are available for now:
\begin{equation} \label{target with label}
    \max_\theta I(Y; f_\theta(\mathcal{C}_{p})) - I(Y; f_\theta(\mathcal{C}_{n})).
\end{equation}
Later, we provide a detailed theoretical analysis (Theorem.~\ref{theorem:1}) to demonstrate that maximizing the objective in Eqn.~\eqref{target with label} is actually equivalent to maximizing GIB.

Next, derive a label-free version of Eqn. \eqref{target with label} since the labels $Y$ are not available in the context of unsupervised learning. We use the following objective function to replace Eqn. \eqref{target with label}:
\begin{equation}\label{eqn-label-free-obj}
\min_\theta I(f_\theta(\mathcal{C}_{p}), f_\theta(\mathcal{C}_{n})). 
\end{equation}
The transition from Eqn. \eqref{target with label} to Eqn. \eqref{eqn-label-free-obj} relies on a crucial assumption: there exists a strong relevance between the topology patterns and labels. Essentially, this assumption allows us to replace the mutual information between labels and embeddings with mutual information among the embeddings of the positive and negative views. We provide Theorem.~\ref{theorem:2} to support this claim.



Still, the objective function~\eqref{eqn-label-free-obj} cannot be used in training since it is hard to compute the mutual information directly. To address this, we adopt the techniques from MINE~\cite{MINE} to implement the objective. Specifically, function  \eqref{eqn-label-free-obj} can be rewritten as:

\begin{equation}\label{final loss function}
    \begin{aligned}
        \mathcal{L}=\min_{f_\theta, \Phi}& -\frac{1}{m} \sum_{i=1}^m \Phi\left(f_\theta(\mathcal{C}_{pi}), f_\theta(\mathcal{C}_{ni})\right) \\
        & +\log \frac{1}{m} \sum_{i=1}^m \sum_{j=1, j \neq i}^m e^{\Phi}\left(f_\theta(\mathcal{C}_{pi}), f_\theta(\mathcal{C}_{nj})\right).
    \end{aligned}
\end{equation}
In the above function, $\Phi$ is a trainable estimator which is usually implemented through MLP. After minimizing Eqn.~\eqref{final loss function} and convergence, the TPGCL outputs the input graph's embedding which contains rich topology pattern and label-related information. 
Finally, the output embeddings can be input to outlier detection or other unsupervised classifiers to be classified.

\section{Theoretical Analysis}
\label{Theoretical Supports}

In this section, we provide the theoretical foundation for the construction of objective function from the perspective of mutual information (MI) and GIB. 

Firstly, for the given input candidate group set $\mathcal{C}$ and corresponding view sets $\mathcal{C}_n$, $\mathcal{C}_p$, we provide the following assumption.


\begin{assumption}\label{assumption:1}
There exists a correlation between the topology patterns and label information. That is, anomalous groups are more likely to exhibit specific topology patterns.
\end{assumption}

It is important to note that the above assumption is rooted in extensive previous research works \cite{flowscope, smurfAML, autoaudit, crysmurfing, antibenford}, where the prior knowledge of topology patterns has been successfully employed to identify anomalies. Moreover, we emphasize that Assumption~\ref{assumption:1} allows us to provide a theoretical justification for our design of learning objective; it does not guarantee the absence of counterexamples in practice. Nevertheless, if real-world anomalous groups \textit{largely} conform to this assumption, we can achieve a good learning result (which is indeed observed in our evaluation).


Under the assumption of its validity, we can propose the following proposition.

\begin{proposition}\label{proposition:1}
    Mutual information $I(Y; f_\theta(\mathcal{C}))$ is approximately equal to entropy $H(Y)$.
\end{proposition}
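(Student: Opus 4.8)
The plan is to reduce the statement to a claim about conditional entropy and then invoke Assumption~\ref{assumption:1}. I would start from the standard decomposition
\begin{equation*}
I(Y; f_\theta(\mathcal{C})) = H(Y) - H(Y \mid f_\theta(\mathcal{C})),
\end{equation*}
so that proving $I(Y; f_\theta(\mathcal{C})) \approx H(Y)$ is equivalent to showing that the conditional entropy $H(Y \mid f_\theta(\mathcal{C}))$ is negligible. Intuitively, $H(Y \mid f_\theta(\mathcal{C})) \approx 0$ precisely when the embedding $f_\theta(\mathcal{C})$ almost surely determines the label $Y$, i.e.\ when the embedding is a near-sufficient predictor of the group label.

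The second step is to argue that this near-determinacy holds under Assumption~\ref{assumption:1}. The assumption states that anomalous groups are strongly correlated with specific topology patterns, so there is a (nearly) deterministic map from the topology pattern of a group to its label. Since TPGCL is constructed precisely so that $f_\theta(\mathcal{C})$ encodes the topology-pattern information of each candidate group (through the \textbf{PPA} and \textbf{PBA} augmentations), the embedding retains enough pattern information to recover the label with small error. Formally, I would let $\hat Y$ denote the Bayes-optimal predictor of $Y$ from $f_\theta(\mathcal{C})$ and set $P_e = \Pr[\hat Y \ne Y]$; Assumption~\ref{assumption:1} together with the pattern-preserving design of $f_\theta$ yields $P_e \approx 0$.

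The third step converts small prediction error into small conditional entropy via Fano's inequality,
\begin{equation*}
H(Y \mid f_\theta(\mathcal{C})) \le H_b(P_e) + P_e \log(|\mathcal{Y}| - 1),
\end{equation*}
where $H_b$ is the binary entropy function and $\mathcal{Y}$ is the label alphabet. As $P_e \to 0$ the right-hand side tends to $0$, so $H(Y \mid f_\theta(\mathcal{C})) \approx 0$, and the claimed approximation follows from the identity in the first step.

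The main obstacle is the rigorous justification of $P_e \approx 0$ in the second step: this is where the qualitative Assumption~\ref{assumption:1} must be translated into a quantitative bound, and where one must additionally assume (or argue) that $f_\theta$ does not discard the pattern information that is predictive of the label. I expect this to be handled at the level of an approximation/heuristic argument rather than a tight bound, which is consistent with the paper's use of the word ``approximately''; the first and third steps are then routine information-theoretic manipulations.
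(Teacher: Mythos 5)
Your skeleton matches the paper's: both proofs start from $I(Y; f_\theta(\mathcal{C})) = H(Y) - H(Y \mid f_\theta(\mathcal{C}))$ and reduce the claim to showing $H(Y \mid f_\theta(\mathcal{C})) \approx 0$ under Assumption~\ref{assumption:1}. Where you diverge is the mechanism for that second half. The paper writes out the four conditional probabilities $P(Y = y \mid f_\theta(\mathcal{C}) = h_i)$ and $P(Y = y \mid f_\theta(\mathcal{C}) \neq h_i)$ explicitly, parameterized by the leakage probability $\xi$, and argues term by term that the conditional entropy is negligible; you instead pass through the Bayes error $P_e$ and Fano's inequality. Your route is cleaner as information theory and makes the "approximately" quantitative in a standard way, whereas the paper's direct computation makes visible exactly which probabilities must be small; both are heuristic at the same point, namely turning the qualitative assumption into a numerical bound.

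The one substantive issue is your justification of $P_e \approx 0$. Assumption~\ref{assumption:1} controls $P(\text{pattern } i \mid Y=1)$, not $P(Y=1 \mid \text{pattern } i)$, and the paper is explicit that the reverse conditional entropy $H(f_\theta(\mathcal{C}) \mid Y)$ need not vanish: on the event $f_\theta(\mathcal{C}) = h_i$ the label can remain genuinely ambiguous (groups with the anomalous pattern need not be anomalous), so the embedding does not "almost surely determine" the label there, and the Bayes predictor can err with constant conditional probability on that event. What rescues the argument in the paper is an additional ingredient you omit: both the anomaly rate $P(Y=1)$ and the pattern probability $P(f_\theta(\mathcal{C}) = h_i)$ are themselves small, so the ambiguous event is rare and contributes negligibly — to the conditional entropy in the paper's version, and to $P_e$ in yours (roughly $P_e \lesssim \xi + P(f_\theta(\mathcal{C}) = h_i)$). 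You flag the second step as the obstacle, which is the right instinct, but to complete your Fano route you must invoke this rarity explicitly rather than near-determinacy of $Y$ given the embedding.
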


\begin{proof}

For anomaly detection, the label set $Y$ can be written as $Y=\{0, 1\}$, where $0$ and $1$ represent normal and anomalous data. Suppose most of the anomaly groups tend to form topology pattern $i$ whose embedding is $h_i$ and the probability of anomaly groups forming other topology patterns is $\xi$:

\begin{equation}
    \left\{
    \begin{array}{l}
        P\left(Y=1 \mid f_\theta(\mathcal{C})=h_i \right) = \frac{P(Y=1) - \xi}{P(f_\theta(\mathcal{C})=h_i)}, \\ P\left(Y=0 \mid f_\theta(\mathcal{C})=h_i\right) = \frac{P(Y=0, f_\theta(\mathcal{C}) = h_i)}{P(f_\theta(\mathcal{C})=h_i)}, \\
        P\left(Y=1 \mid f_\theta(\mathcal{C}) \neq h_i \right) = \frac{\xi}{P(f_\theta(\mathcal{C}) \neq h_i)}, \\ P\left(Y=0 \mid f_\theta(\mathcal{C}) \neq h_i \right) = \frac{P(Y=0, f_\theta(\mathcal{C}) \neq h_i)}{P(f_\theta(\mathcal{C})\neq h_i)}.
    \end{array}
    \right.
\end{equation}
We assume a strong correlation between anomalies and topology patterns $i$, with few anomaly groups forming other topology patterns, which means $\xi$ is small, resulting in:  

\begin{equation}
    \left\{
    \begin{array}{l}
        P\left(Y=1 \mid f_\theta(\mathcal{C})=h_i \right) \rightarrow \frac{P(Y=1)}{P(f_\theta(\mathcal{C})=h_i)}, 
        \\ P\left(Y=0 \mid f_\theta(\mathcal{C})=h_i\right) \rightarrow 1-\frac{P(Y=1)}{P(f_\theta(\mathcal{C})=h_i)}, \\
        P\left(Y=1 \mid f_\theta(\mathcal{C}) \neq h_i \right) \rightarrow 0, 
        \\ P\left(Y=0 \mid f_\theta(\mathcal{C}) \neq h_i \right) \rightarrow 1.
    \end{array}
    \right. \label{conditional prob}
\end{equation}
Furthermore, with few samples being anomaly samples (always less than 0.1) and the vast diversity of topology patterns present in the graph, the probability of a certain topology pattern is small. As a result, both $P(Y=1)$ and $P(f_\theta(\mathcal{C})=h_i)$ are small too. Therefore, the value of the conditional entropy $ H(Y|f_\theta(\mathcal{C}))$ tends to be zero, or a tiny small value can be neglected, resulting in $I(Y;f_\theta(\mathcal{C}))$ approximately equals to $H(Y)$, since the definition of MI:

\begin{equation}
    I(Y;f_\theta(\mathcal{C})) = H(Y) - H(Y|f_\theta(\mathcal{C})).
    \label{MI definition}
\end{equation}
\end{proof}

Proposition~\ref{proposition:1} indicates that under the Assumption~\ref{assumption:1}, the conditional entropy is significantly small, making it negligible. In this context, the mutual information can be approximated as being equal to the information entropy of the labels. 
Note the conditional entropy $H(Y|f_\theta(\mathcal{C}))$ tends to be zero does not mean that $H(f_\theta(\mathcal{C})|Y)$ approximates to be zero. That is, anomaly groups tend to form certain topology patterns, but it does not mean groups with certain topology patterns must be anomaly groups.

Based on Proposition~\ref{proposition:1}, we provide two Lemmas:

\begin{lemma}\label{lemma:1}
The mutual information of labels $Y$ and embedding of negative view $f_\theta(\mathcal{C}_{n})$ is less than the mutual information $I(Y; f_\theta(\mathcal{C}))$: $I(Y; f_\theta(\mathcal{C}_{n})) \leq I(Y; f_\theta(\mathcal{C}))$.
\end{lemma}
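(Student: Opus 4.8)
The plan is to derive the inequality directly from Proposition~\ref{proposition:1} together with the elementary fact that a mutual information can never exceed the entropy of either of its arguments. Expanding both sides through the definition of mutual information (Eqn.~\eqref{MI definition}),
\begin{equation*}
I(Y; f_\theta(\mathcal{C})) = H(Y) - H(Y\mid f_\theta(\mathcal{C})), \qquad I(Y; f_\theta(\mathcal{C}_{n})) = H(Y) - H(Y\mid f_\theta(\mathcal{C}_{n})),
\end{equation*}
shows that both quantities share the common term $H(Y)$, so the claim is equivalent to the single statement $H(Y\mid f_\theta(\mathcal{C}_{n})) \geq H(Y\mid f_\theta(\mathcal{C}))$.

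First I would invoke Proposition~\ref{proposition:1}, which under Assumption~\ref{assumption:1} yields $H(Y\mid f_\theta(\mathcal{C})) \to 0$ and hence $I(Y; f_\theta(\mathcal{C})) \approx H(Y)$; in other words the original view already sits at the ceiling $H(Y)$ that any $I(Y;\cdot)$ can attain. For the negative view I would then apply the universal bound $I(Y; f_\theta(\mathcal{C}_{n})) \leq H(Y)$, which is immediate from the nonnegativity $H(Y\mid f_\theta(\mathcal{C}_{n})) \geq 0$. Chaining the two gives $I(Y; f_\theta(\mathcal{C}_{n})) \leq H(Y) \approx I(Y; f_\theta(\mathcal{C}))$, which is exactly the desired inequality.

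The part I expect to require the most care is arguing that $H(Y\mid f_\theta(\mathcal{C}_{n}))$ is genuinely bounded away from zero, so that the inequality carries real content rather than collapsing to an equality. Here I would lean on the design of the Pattern Breaking Augmentation (\textbf{PBA}): because the negative view is obtained by destroying exactly the topology patterns that Assumption~\ref{assumption:1} ties to the labels, the pattern-label correlation driving the collapse of the conditional probabilities in Eqn.~\eqref{conditional prob} no longer holds for $\mathcal{C}_{n}$, so its conditional entropy does not vanish. Quantifying this residual uncertainty as a function of how thoroughly PBA breaks each pattern is the delicate step; for the lemma itself, however, only the weak relation $H(Y\mid f_\theta(\mathcal{C}_{n})) \geq 0 \approx H(Y\mid f_\theta(\mathcal{C}))$ is needed, so the conclusion holds regardless. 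An alternative, corroborating route is the data processing inequality applied to the Markov chain $Y \to \mathcal{C} \to \mathcal{C}_{n}$ (the augmentation never consults the labels), giving $I(Y;\mathcal{C}_{n}) \leq I(Y;\mathcal{C})$; I would treat the Proposition~\ref{proposition:1}-based argument as primary, since passing through the learned encoder $f_\theta$ obscures the clean Markov structure at the embedding level.
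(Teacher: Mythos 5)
Your proof is correct (to the same standard of rigor the paper itself uses), but it takes a genuinely different route from the paper's. The paper argues directly that PBA, by destroying the pattern--label correlation, makes the conditional distribution $P(Y\mid f_\theta(\mathcal{C}_n))$ ``less extreme'' than $P(Y\mid f_\theta(\mathcal{C}))$, hence $H(Y\mid f_\theta(\mathcal{C}_n)) \geq H(Y\mid f_\theta(\mathcal{C}))$, and then subtracts from $H(Y)$ on both sides --- it never invokes Proposition~\ref{proposition:1}. You instead use Proposition~\ref{proposition:1} to pin $I(Y; f_\theta(\mathcal{C}))$ at (approximately) the ceiling $H(Y)$ and then apply the universal bound $I(Y; f_\theta(\mathcal{C}_n)) \leq H(Y)$. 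Your route is cleaner in that it avoids the paper's informal ``less extreme distribution implies larger conditional entropy'' step, which is not a theorem in general; the price is that your conclusion holds only up to the approximation error $H(Y\mid f_\theta(\mathcal{C}))$ of Proposition~\ref{proposition:1}, so the inequality is established as $I(Y; f_\theta(\mathcal{C}_n)) \leq H(Y) = I(Y; f_\theta(\mathcal{C})) + H(Y\mid f_\theta(\mathcal{C}))$ rather than exactly, whereas the paper's argument (were its monotonicity claim made rigorous) would give the exact inequality. Your remark that the content of the lemma lies in $H(Y\mid f_\theta(\mathcal{C}_n))$ being bounded away from zero is exactly the substance the paper tries to extract from the PBA design, and your observation that the data processing inequality alone cannot close the argument at the embedding level (since $f_\theta(\mathcal{C})$ is itself a processed version of $\mathcal{C}$) is accurate and appropriately flagged as secondary.
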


\begin{proof}

Since the topology patterns of anomaly groups have been broken and some anomaly groups' topology patterns have converted to other patterns, the conditional probabilities $P(Y=1|f_\theta(\mathcal{C}_n)=h_i)$ will decrease and $P(Y=1|f_\theta(\mathcal{C}_n)\neq h_i)$ will increase. In other words, the distribution of conditional probabilities becomes less extreme, and thus conditional entropy becomes larger:

\begin{equation}
    H(Y \mid f_\theta(\mathcal{C}_n)) \geq  H(Y \mid f_\theta(\mathcal{C})),
\end{equation}
and the mutual information between labels and negative views fulfills:

\begin{equation}
    \begin{aligned}
        I(Y;f_\theta(\mathcal{C}_n)) & = H(Y) - H(Y \mid f_\theta(\mathcal{C}_n)) \\
        & \leq H(Y) - H(Y \mid f_\theta(\mathcal{C})) \\
        & = I(Y;f_\theta(\mathcal{C})).
    \end{aligned}
\end{equation}
\end{proof}

\begin{lemma}\label{lemma:2}
The mutual information of labels $Y$ and embedding of positive view $f_\theta(\mathcal{C}_{p})$ is equal to the mutual information $I(Y; f_\theta(\mathcal{C}))$: $I(Y; f_\theta(\mathcal{C}_{p})) = I(Y; f_\theta(\mathcal{C}))$.
\end{lemma}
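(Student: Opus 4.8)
The plan is to mirror the proof of Lemma~\ref{lemma:1}, but to exploit the defining property of the Pattern Preserving Augmentation (PPA) that generates the positive view: unlike PBA, PPA leaves the intrinsic topology pattern of every candidate group intact. A tree with an added child is still a tree, a prolonged path is still a path, and an extended cycle is still a cycle. Consequently, if a group originally embeds to the pattern representation $h_i$, its positive view is still recognized as the same pattern, so that $f_\theta(\mathcal{C}_{p})$ carries the same pattern identity as $f_\theta(\mathcal{C})$. This is the single structural fact that distinguishes the positive from the negative view, and everything else should follow from it.

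First I would argue that, because the pattern class is preserved, Assumption~\ref{assumption:1} continues to hold for the positive views with the \emph{same} slippage parameter $\xi$ (the probability that an anomalous group forms a pattern other than $i$): enlarging a pattern does not create any new association between anomalies and other patterns. Hence the conditional probabilities $P(Y \mid f_\theta(\mathcal{C}_{p}))$ coincide with those in Eqn.~\eqref{conditional prob}, i.e. $P(Y \mid f_\theta(\mathcal{C}_{p})) = P(Y \mid f_\theta(\mathcal{C}))$. This is the step where PPA and PBA diverge: in Lemma~\ref{lemma:1}, breaking the pattern made the conditional distribution less extreme and thus raised the conditional entropy, whereas here the distribution is left unchanged.

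Next I would conclude that the conditional entropy is unchanged, $H(Y \mid f_\theta(\mathcal{C}_{p})) = H(Y \mid f_\theta(\mathcal{C}))$, and then invoke the definition of mutual information in Eqn.~\eqref{MI definition} to obtain
\begin{equation*}
I(Y; f_\theta(\mathcal{C}_{p})) = H(Y) - H(Y \mid f_\theta(\mathcal{C}_{p})) = H(Y) - H(Y \mid f_\theta(\mathcal{C})) = I(Y; f_\theta(\mathcal{C})).
\end{equation*}
Equivalently, since PPA preserves the pattern--label correlation, the entire argument of Proposition~\ref{proposition:1} applies verbatim to $\mathcal{C}_{p}$, yielding $I(Y; f_\theta(\mathcal{C}_{p})) \approx H(Y)$, which is already equal to $I(Y; f_\theta(\mathcal{C}))$; this gives an alternative and arguably cleaner route to the same conclusion.

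The main obstacle is justifying the exact invariance of the conditional distribution. Strictly speaking this is an approximate rather than an exact equality: it rests on the modeling claim that $f_\theta$ depends on a group essentially through its topology pattern class, and that expanding a pattern neither dilutes nor strengthens its correlation with the label (so that $\xi$ is invariant under augmentation). I would therefore present the argument at the same level of rigor as Proposition~\ref{proposition:1} and Lemma~\ref{lemma:1}, making explicit that the equality is understood under the idealization that PPA is genuinely pattern-preserving and that the slippage $\xi$ does not change under augmentation.
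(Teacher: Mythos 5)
Your proposal matches the paper's proof essentially step for step: the paper likewise argues that PPA preserves the topology patterns, hence the conditional probabilities $P(Y \mid f_\theta(\mathcal{C}_{p}))$ coincide with $P(Y \mid f_\theta(\mathcal{C}))$, so the conditional entropies and therefore the mutual informations are equal via $I(Y;\cdot)=H(Y)-H(Y\mid\cdot)$. Your added remarks on the invariance of $\xi$ and the idealized nature of the equality are a fair (and slightly more careful) gloss on the same argument, not a different route.
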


\begin{proof}

Topology patterns keep consistent because PPA augmentation maintains the topology patterns. Due to the labels being fixed, the conditional probabilities fulfill:

$$
    \begin{cases}
        P\left(Y=1 \mid f_\theta(\mathcal{C}_p)=h_i\right)= P\left(Y=1 \mid f_\theta(\mathcal{C})=h_i\right) \\
        P\left(Y=0 \mid f_\theta(\mathcal{C}_p)=h_i\right)= P\left(Y=0 \mid f_\theta(\mathcal{C})=h_i\right) \\
        P\left(Y=1 \mid f_\theta(\mathcal{C}_p)\neq h_i\right)= P\left(Y=1 \mid f_\theta(\mathcal{C})\neq h_i\right) \\
        P\left(Y=0 \mid f_\theta(\mathcal{C}_p)\neq h_i\right)= P\left(Y=0 \mid f_\theta(\mathcal{C})\neq h_i\right).
    \end{cases}
$$
Naturally, the conditional entropy $H(Y \mid f_\theta(\mathcal{C}_p))$ and $I(Y;f_\theta(\mathcal{C}_p))$ fulfill:
\begin{equation}
    \begin{aligned}
    H(Y \mid f_\theta(\mathcal{C}_p)) = & H(Y \mid f_\theta(\mathcal{C})),\\ I(Y;f_\theta(\mathcal{C}_p)) = & I(Y;f_\theta(\mathcal{C})).
    \end{aligned}
\end{equation}

\end{proof}

Lemma.~\ref{lemma:1} indicates the $I(Y; f_\theta(\mathcal{C}_{n}))$ between labels $Y$ and the embedding of PBA outputs negative view $\mathcal{C}_{n}$ decreases after PBA breaks the intrinsic patterns in a graph since the specific kinds of anomalous groups have vanished and $f_\theta(\mathcal{C}_{n})$ is not sufficient to determine $Y$ any more. In other words, PBA breaks the relevance between topology patterns and ground truth. Unlike PBA breaking intrinsic patterns, PPA maintains and expands intrinsic patterns and therefore saves the most relevant information. 

With Lemma.~\ref{lemma:1} and Lemma.~\ref{lemma:2} in hand, we provide the theoretical supports of maximizing objective function~\eqref{target with label} is equal to maximizing GIB:

\begin{theorem}\label{theorem:1}
 GIB and $I(Y; f_\theta(\mathcal{C}_{p})) - I(Y; f_\theta(\mathcal{C}_{n}))$ share a common maximization objective.
\end{theorem}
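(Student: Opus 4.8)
The plan is to show that both maximization problems collapse, after removing a shared $\theta$-independent term, onto the same pair of goals: maximize the label-relevant information and penalize the label-irrelevant redundancy. First I would observe that the positive term of GIB in Eqn.~\eqref{GIB}, namely $I(Y; f_\theta(\mathcal{C}))$, is exactly the quantity that Lemma~\ref{lemma:2} identifies with $I(Y; f_\theta(\mathcal{C}_{p}))$, the first term of Eqn.~\eqref{target with label}. By Proposition~\ref{proposition:1} this common quantity is approximately $H(Y)$, which does not depend on $\theta$ and is therefore a constant of the optimization. Stripping it from each objective, maximizing GIB becomes equivalent to $\min_\theta I(\mathcal{C}; f_\theta(\mathcal{C}))$, while maximizing Eqn.~\eqref{target with label} becomes equivalent to $\min_\theta I(Y; f_\theta(\mathcal{C}_{n}))$. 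The theorem then reduces to showing that these two residual minimizations are driven to their optimum by the same encoder.

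The bridge between the two residual targets is where the real work lies, and I expect it to be the main obstacle. The idea is to argue that $I(Y; f_\theta(\mathcal{C}_{n}))$ serves as a faithful surrogate for the compression term $I(\mathcal{C}; f_\theta(\mathcal{C}))$. Here I would invoke Assumption~\ref{assumption:1} together with Lemma~\ref{lemma:1}: because PBA destroys precisely the topology patterns that carry the label signal, the embedding $f_\theta(\mathcal{C}_{n})$ can remain correlated with $Y$ only through label-irrelevant, input-specific information, which is exactly the redundancy that $I(\mathcal{C}; f_\theta(\mathcal{C}))$ penalizes. I would formalize this by decomposing the input information captured by $f_\theta$ into a label-relevant component (preserved by PPA and visible through the positive view) and a label-irrelevant component (the only channel surviving in the negative view), and then showing that $I(Y; f_\theta(\mathcal{C}_{n}))$ varies monotonically with the label-irrelevant portion of $I(\mathcal{C}; f_\theta(\mathcal{C}))$. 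Under such a monotone correspondence the two residual objectives are minimized together, so GIB and Eqn.~\eqref{target with label} are maximized by the same $\theta$.

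The delicate point I anticipate is that $I(\mathcal{C}; f_\theta(\mathcal{C}))$ and $I(Y; f_\theta(\mathcal{C}_{n}))$ are not equal in general; any clean algebraic identity will fail. I would therefore lean on the approximate regime guaranteed by Proposition~\ref{proposition:1} — rare anomalies and small $\xi$, so that the conditional entropy is negligible — and phrase the conclusion as an equivalence of optima (a shared optimization direction) rather than of values. This matches the theorem's wording of a \emph{common maximization objective}: maximizing label-relevant mutual information while suppressing label-irrelevant redundancy, which both GIB and Eqn.~\eqref{target with label} enforce once the shared constant term is removed.
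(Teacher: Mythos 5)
Your reduction diverges from the paper's proof at the very first step, and the divergence opens a gap you never close. The paper's proof never treats $I(Y; f_\theta(\mathcal{C}))$ as a constant; on the contrary, it identifies $I(Y; f_\theta(\mathcal{C}))$ itself as the \emph{common objective}: by non-negativity of MI, $I(Y; f_\theta(\mathcal{C})) - I(\mathcal{C}; f_\theta(\mathcal{C})) \leq I(Y; f_\theta(\mathcal{C}))$, so GIB is upper-bounded by the prediction term; and by Lemmas~\ref{lemma:1} and~\ref{lemma:2}, $I(Y; f_\theta(\mathcal{C}_{p})) - I(Y; f_\theta(\mathcal{C}_{n})) = I(Y; f_\theta(\mathcal{C})) - I(Y; f_\theta(\mathcal{C}_{n})) \leq I(Y; f_\theta(\mathcal{C}))$, so Eqn.~\eqref{target with label} is a lower bound of the same quantity. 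The compression term $I(\mathcal{C}; f_\theta(\mathcal{C}))$ is simply discarded via non-negativity; no correspondence between it and $I(Y; f_\theta(\mathcal{C}_{n}))$ is ever needed. By instead invoking Proposition~\ref{proposition:1} to strip $I(Y; f_\theta(\mathcal{C})) \approx H(Y)$ as a $\theta$-independent constant, you (i) make the paper's intended common objective trivially maximized by every $\theta$, which undercuts rather than proves the theorem, and (ii) leave yourself with the residual claim that $\min_\theta I(\mathcal{C}; f_\theta(\mathcal{C}))$ and $\min_\theta I(Y; f_\theta(\mathcal{C}_{n}))$ have the same optimizer.

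That residual claim is the genuine gap: you acknowledge it is ``where the real work lies'' but the bridge you sketch does not hold up. The assertion that $f_\theta(\mathcal{C}_{n})$ ``can remain correlated with $Y$ only through label-irrelevant, input-specific information'' is self-contradictory --- information that is label-irrelevant contributes nothing to $I(Y;\cdot)$ by definition, so whatever correlation with $Y$ survives PBA is, by definition, label-relevant information carried by features other than the destroyed topology patterns. There is also no monotone relationship between the two residuals in general: an encoder that memorizes the input (maximal $I(\mathcal{C}; f_\theta(\mathcal{C}))$) can still yield $I(Y; f_\theta(\mathcal{C}_{n})) = 0$ if the broken views carry no label signal, while a heavily compressing encoder can retain exactly the PBA-surviving predictive features and make $I(Y; f_\theta(\mathcal{C}_{n}))$ large. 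Since the claimed monotone correspondence is neither proved nor true without further hypotheses, the proposal does not establish the theorem. The fix is to abandon the constant-stripping reduction and follow the sandwich argument around $I(Y; f_\theta(\mathcal{C}))$ using only $I(\mathcal{C}; f_\theta(\mathcal{C})) \geq 0$, $I(Y; f_\theta(\mathcal{C}_{n})) \geq 0$, and Lemmas~\ref{lemma:1}--\ref{lemma:2}.
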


\begin{proof}

According to the properties of the MI, we have $I(\mathcal{C}; f_\theta(\mathcal{C})) \geq 0$ and thus:

\begin{equation}
    I(Y; f_\theta(\mathcal{C})) - I(\mathcal{C}; f_\theta(\mathcal{C})) \leq I(Y; f_\theta(\mathcal{C})),
\end{equation}
which shows $I(Y; f_\theta(\mathcal{C}))$ is the upper bound of GIB. Therefore, maximizing GIB is actually maximizing $I(Y; f_\theta(\mathcal{C}))$. That is, maximizing the mutual information between labels and embeddings of the input graphs. 

Based on Lemma 1, Lemma 2, and the non-negativity property of MI, the following equation holds:

\begin{equation}
    \begin{aligned}
        I(Y; f_\theta(\mathcal{C})) & \geq I(Y; f_\theta(\mathcal{C})) - I(Y; f_\theta(\mathcal{C}_{n})) \\
        & = I(Y; f_\theta(\mathcal{C}_{p})) - I(Y; f_\theta(\mathcal{C}_{n})),
    \end{aligned}
\end{equation}
which shows $I(Y; f_\theta(\mathcal{C}_{p})) - I(Y; f_\theta(\mathcal{C}_{n}))$ is the lower bound of $I(Y; f_\theta(\mathcal{C}))$. Thus GIB and Eqn.~\eqref{target with label} share a common maximization objective, which means maximizing $I(Y; f_\theta(\mathcal{C}_{p})) - I(Y; f_\theta(\mathcal{C}_{n}))$ is equal to maximizing $I(Y; f_\theta(\mathcal{C}))$.

\end{proof}

As function~\eqref{target with label} still relies on labels, we introduce Theorem~\ref{theorem:2} to convert function~\eqref{target with label} to unsupervised  function~\eqref{eqn-label-free-obj}:

\begin{theorem}\label{theorem:2}
 Maximizing $I(Y; f_\theta(\mathcal{C}_{p})) - I(Y; f_\theta(\mathcal{C}_{n}))$ is equal to minimizing $I(f_\theta(\mathcal{C}_{p}), f_\theta(\mathcal{C}_{n}))$.
\end{theorem}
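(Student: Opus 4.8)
The plan is to reduce the label-dependent objective of Theorem~\ref{theorem:1} to the label-free quantity in Eqn.~\eqref{eqn-label-free-obj} by exploiting the fact that, under Assumption~\ref{assumption:1}, the positive-view embedding $f_\theta(\mathcal{C}_{p})$ already encodes essentially all of the label information. First I would rewrite the objective. By Lemma~\ref{lemma:2} we have $I(Y; f_\theta(\mathcal{C}_{p})) = I(Y; f_\theta(\mathcal{C}))$, and by Proposition~\ref{proposition:1} this equals $H(Y)$ up to a negligible conditional-entropy term. Since $H(Y)$ does not depend on the parameters $\theta$, the first term of $I(Y; f_\theta(\mathcal{C}_{p})) - I(Y; f_\theta(\mathcal{C}_{n}))$ is constant over the optimization, so maximizing this difference over $\theta$ is equivalent to minimizing $I(Y; f_\theta(\mathcal{C}_{n}))$.

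The second step is to replace the label $Y$ by the positive-view embedding $f_\theta(\mathcal{C}_{p})$ inside the mutual information with $f_\theta(\mathcal{C}_{n})$. The key observation is that $I(Y; f_\theta(\mathcal{C}_{p})) \approx H(Y)$ is equivalent to $H(Y \mid f_\theta(\mathcal{C}_{p})) \approx 0$, i.e.\ the label is (approximately) a deterministic function of the positive embedding. Consequently the Markov relation $f_\theta(\mathcal{C}_{n}) \to f_\theta(\mathcal{C}_{p}) \to Y$ holds, and the data processing inequality yields $I(Y; f_\theta(\mathcal{C}_{n})) \le I(f_\theta(\mathcal{C}_{p}); f_\theta(\mathcal{C}_{n}))$. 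For the reverse direction I would argue that, because both views are generated from the same candidate group and differ only in their topology pattern, the positive embedding and the label carry the same label-relevant content, so $f_\theta(\mathcal{C}_{p})$ acts as a sufficient surrogate for $Y$; treating $f_\theta(\mathcal{C}_{p})$ and $Y$ as informationally interchangeable then gives $I(Y; f_\theta(\mathcal{C}_{n})) \approx I(f_\theta(\mathcal{C}_{p}); f_\theta(\mathcal{C}_{n}))$. Combining the two steps, minimizing $I(Y; f_\theta(\mathcal{C}_{n}))$---and hence maximizing the original objective---is equivalent to minimizing the label-free quantity $I(f_\theta(\mathcal{C}_{p}); f_\theta(\mathcal{C}_{n}))$.

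The main obstacle is precisely the interchange of $Y$ and $f_\theta(\mathcal{C}_{p})$ in the second step. The data processing inequality only delivers one direction, $I(Y; f_\theta(\mathcal{C}_{n})) \le I(f_\theta(\mathcal{C}_{p}); f_\theta(\mathcal{C}_{n}))$, and the gap is exactly the label-irrelevant information that the two views share. To turn the bound into the asserted equivalence I would need to argue that this residual shared information is either independent of $\theta$ or negligible: since the augmentations perturb only the topology pattern while leaving the remaining node content identical across the two views, the shared label-irrelevant term should contribute an additive constant that does not affect the minimizer over $\theta$. Making this last point rigorous---rather than heuristic---is the delicate part, since in general $f_\theta$ can entangle pattern and content, and I would likely have to lean on the strong-correlation regime of Assumption~\ref{assumption:1} (small $\xi$ in Proposition~\ref{proposition:1}) to control the residual term.
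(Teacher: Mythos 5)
Your proposal follows essentially the same route as the paper: both use Proposition~\ref{proposition:1} and Lemma~\ref{lemma:2} to replace $I(Y; f_\theta(\mathcal{C}_{p}))$ by the $\theta$-independent constant $H(Y)$, and both derive the key bound $I(Y; f_\theta(\mathcal{C}_{n})) \le I(f_\theta(\mathcal{C}_{p}); f_\theta(\mathcal{C}_{n}))$ from the vanishing of $H(Y \mid f_\theta(\mathcal{C}_{p}))$ --- your data-processing-inequality step is just the Markov-chain packaging of the paper's chain-rule expansion of $I((f_\theta(\mathcal{C}_{p}), Y); f_\theta(\mathcal{C}_{n}))$. The reverse direction you flag as delicate is not supplied by the paper either: it stops at showing that $H(Y) - I(f_\theta(\mathcal{C}_{p}); f_\theta(\mathcal{C}_{n}))$ is a lower bound of the objective and treats maximizing that lower bound as ``equal to'' maximizing the objective, leaving the residual $I(f_\theta(\mathcal{C}_{p}); f_\theta(\mathcal{C}_{n}) \mid Y)$ uncontrolled exactly as you observe.
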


\begin{proof}

Considering the MI between the joint distribution of ($f_\theta(\mathcal{C}_{p})$, $Y$), and the distribution of $f_\theta(\mathcal{C}_{n})$: $I((f_\theta(\mathcal{C}_{p}), Y); f_\theta(\mathcal{C}_{n}))$. Due to the properties of MI, we have:

\begin{equation}\label{joint MI}
    \begin{aligned}
        & I((f_\theta(\mathcal{C}_{p}), Y); f_\theta(\mathcal{C}_{n})) \\
        & = I(f_\theta(\mathcal{C}_{p}); f_\theta(\mathcal{C}_{n})) + I(Y; f_\theta(\mathcal{C}_{n}) | f_\theta(\mathcal{C}_{p})) \\
        & = I(Y; f_\theta(\mathcal{C}_{n})) + I(f_\theta(\mathcal{C}_p);f_\theta(\mathcal{C}_{n}) | Y),
    \end{aligned}
\end{equation}
and conditional entropy $H(Y|f_\theta(\mathcal{C}_{p}))$ can be represented as:
$$
    H(Y|f_\theta(\mathcal{C}_{p})) = I(Y;f_\theta(\mathcal{C}_{n})|f_\theta(\mathcal{C}_{p})) + H(Y|f_\theta(\mathcal{C}_{p});f_\theta(\mathcal{C}_{n})).
$$
Based on Proposition.~\ref{proposition:1} and Lemma.~\ref{lemma:2}, $H(Y|f_\theta(\mathcal{C})) = H(Y|f_\theta(\mathcal{C}_{p}))\rightarrow 0$, thus:
\begin{equation}\label{tend to zero}
    \begin{aligned}
        & I(Y;f_\theta(\mathcal{C}_{n})|f_\theta(\mathcal{C}_{p})) \rightarrow 0, 
        & H(Y|f_\theta(\mathcal{C}_{p});f_\theta(\mathcal{C}_{n})) \rightarrow 0.
    \end{aligned}
\end{equation}
According to the Eqn.~\eqref{joint MI} and Eqn.~\eqref{tend to zero}, we have:
\begin{equation}
    I(f_\theta(\mathcal{C}_{p}); f_\theta(\mathcal{C}_{n})) = I(Y; f(\mathcal{C}_{n})) + I(f_\theta(\mathcal{C}_p);f_\theta(\mathcal{C}_{n}) | Y).
\end{equation}
According to the properties of MI, $I(f_\theta(\mathcal{C}_n);f_\theta(\mathcal{C}_{p}) | Y)\geq 0$, thus we have:
\begin{equation} \label{inequality}
    I(f_\theta(\mathcal{C}_{p}); f_\theta(\mathcal{C}_{n})) \geq I(Y; f_\theta(\mathcal{C}_{n})).
\end{equation}
Furthermore, according to Proposition.~\ref{proposition:1}, we have:
\begin{equation} \label{fixed}
    I(Y; f_\theta(\mathcal{C}_{p})) \approx H(Y).
\end{equation}
Based on Eqn.~\eqref{inequality} and Eqn.~\eqref{fixed}, the ideal objective function Eqn.~\eqref{target with label} can be rewritten as:
\begin{equation}
    \begin{aligned}
        I(Y; f_\theta(\mathcal{C}_{p})) - I(Y; f_\theta(\mathcal{C}_{n})) &
        \approx H(Y) - I(Y; f_\theta(\mathcal{C}_{n})) \\
        \geq & H(Y) - I(f_\theta(\mathcal{C}_{p}); f_\theta(\mathcal{C}_{n})).
    \end{aligned}
\end{equation}
It demonstrates the $H(Y) - I(f_\theta(\mathcal{C}_{p}); f_\theta(\mathcal{C}_{n}))$ is the lower bound of $I(Y; f_\theta(\mathcal{C}_{p})) - I(Y; f_\theta(\mathcal{C}_{n}))$. Therefore, maximizing $H(Y) - I(f_\theta(C_{p}); f_\theta(\mathcal{C}_{n}))$ is equal to maximizing $I(Y; f_\theta(\mathcal{C}_{p})) - I(Y; f_\theta(\mathcal{C}_{n}))$. The objective function is:
\begin{equation}
    \begin{aligned}
        & \max_{f_\theta} H(Y) - I(f_\theta(\mathcal{C}_{p}); f_\theta(\mathcal{C}_{n})) \\
        = & \min_{f_\theta} I(f_\theta(\mathcal{C}_{p}); f_\theta(\mathcal{C}_{n})) - H(Y) \\
        = & \min_{f_\theta} I(f_\theta(\mathcal{C}_{p}); f_\theta(\mathcal{C}_{n})).
    \end{aligned}
\end{equation}
The above function holds due to the $H(Y)$ being a fixed value just determined by the distribution of label set $Y$ and irrelevant to model $f_\theta$.
\end{proof}

Theorem.~\ref{theorem:2} indicates we can train TPGCL in an unsupervised manner since the positive view contains more label-relevance information by maintaining intrinsic topology patterns and the negative view's information is label-irrelevance after breaking intrinsic topology patterns.

\section{Experiments}
\label{Experiments}
In this section, we evaluate our proposed framework and aim to answer the following research question (RQ):
\begin{enumerate}
\item \textbf{RQ1:} Is \textbf{TP-GrGAD} effective for \textbf{Gr-GAD}?
\item \textbf{RQ2:} How does \textbf{TP-GrGAD} perform comparing to existing N-GAD and Sub-GAD methods?
\item \textbf{RQ3:} To what extent do the newly proposed MH-GAE and TPGCL contribute to our framework?
\end{enumerate} 
We begin by introducing datasets, evaluation metrics, and experimental setup. Subsequently, we present a performance comparison to address \textbf{RQ1} and \textbf{RQ2} in Section \ref{Exp: performance comparison}. \textbf{RQ3} is addressed through ablation studies in Section \ref{Exp: ablation studies}. Furthermore, to comprehend the effectiveness of the proposed method, we provide visualizations in Section \ref{Exp:visualization}.

\subsection{Experiment Setup}

\subsubsection{Datesets}
The datasets we employ include three synthetic datasets (\textbf{simML}~\cite{AMLSim}, \textbf{Cora-group}, and \textbf{CiteSeer-group}) and two real-world datasets (\textbf{AMLPublic}~\cite{AMLPublic} and \textbf{Ethereum-TSGN}~\cite{EthereumTSGN}). To ensure the above datasets can be used to evaluate the performance of Gr-GAD, we first locate and label anomaly groups according to the labeled anomaly nodes or inject anomaly groups. Detailed statistics of datasets are presented in Table~\ref{datasets}, where \#Attr and Avg. size are dimensions of node attributes and average anomaly group size, respectively. 


\begin{table}[th]
\centering
\small
\caption{\centering{Statistical details of the datasets.}}
\resizebox{0.8\columnwidth}{!}{%
\begin{tabu}{c|c|c|c|c|c}\tabucline[1.5pt]{-}
Dataset & \#Node & \#Edge & \#Attr & \makecell[c]{\#Anomaly\\ Group} & \makecell[c]{Avg.\\ size} \\ \hline
simML & 2,768 & 4,226 & 3,123 & 74 &3.52\\ \hline
Cora-g & 2,847 & 10,792 & 1,433 & 22 &6.32\\ \hline
CiteSeer-g & 3,463 & 9,334 & 3,703 & 22 &6.18\\ \hline
AMLP & 16,720 & 17,238 & 16 & 19 &19.05\\ \hline
Eth & 1,823 & 3,254 & 13 & 17 &7.23 \\\tabucline[1.5pt]{-}
\end{tabu}}\label{datasets}
\end{table}

\begin{itemize}[leftmargin=*]
\item \textbf{AMLPublic}: This dataset contains 90,000 bank accounts. After data cleaning, we construct a graph comprising 16,720 nodes and 17,238 edges, where nodes represent bank accounts and edges represent transactions. By leveraging provided labels, we identified 19 abnormal groups involve in money laundering.  We abbreviate it as \textbf{AMLP} in this paper.

\item \textbf{Ethereum-TSGN}: This dataset was collected by Wang et al.~\cite{EthereumTSGN} from Ethereum, where nodes and edges represent user accounts and transactions, respectively. It consists of 17 distinct phishing groups involved in Phishing scams. We abbreviate it as \textbf{Eth} in this paper. 




\item \textbf{simML}: A synthetic money laundering dataset opened in Kaggle, which was generated by IBM AMLSim \cite{AMLSim} based on financial principles. This dataset contains over 2,000 nodes and 4,000 edges where nodes represent accounts and edges represent transactions. 

\item \textbf{Cora-group}: A synthetic Gr-GAD dataset based on a widely used node classification dataset: Cora \cite{Cora}. We choose anchor nodes and add new nodes to link these anchor nodes to form anomaly groups. Attributes of new adding nodes are generated by adding Gaussian noise to the anchor nodes' attributes. This paper abbreviates it as \textbf{Cora-g}.

\item \textbf{CiteSeer-group}: Generated from CiteSeer \cite{CiteSeer} dataset in the same way as Cora-group, which contains bag-of-words representation of documents and citation links between the documents. This paper abbreviates it as \textbf{CiteSeer-g}.

\end{itemize}

To validate our assumption that anomaly groups tend to exhibit specific topology patterns based on their functions, we evaluate the topology patterns of anomaly groups using two real-world datasets. As presented in Table~\ref{topology patterns statistic}, although a few counterexamples exist, the majority of anomaly groups in AMLPublic exhibit path-like topology patterns, while those in Ethereum-TSGN display tree-like and cycle-like structures. This observation confirms our assumption and provides support for the theoretical analysis in Section~\ref{Theoretical Supports}. To illustrate this further, we select two representative anomaly groups from AMLPublic and Ethereum-TSGN, showcased in Fig.~\ref{example anomaly group}. The example anomaly group in AMLPublic forms a coherent path, reflecting a real-world money laundering flow. Conversely, Ethereum-TSGN's example anomaly group is related to phishing activities and comprises a cycle with an inner cycle, indicating significant interactions within the group.

\begin{table}[!h]
\centering
\small
\caption{\centering{Topology pattern statistic.}}
\begin{tabu}{c|c|c|c|c}\tabucline[1.5pt]{-}
     & \#Path & \#Tree & \#Cycle & \#Total \\ \hline
AMLPublic & 18   & 1    & 0     & 19    \\ \hline
Ethereum-TSGN  & 1    & 9    & 7     & 17   \\\tabucline[1.5pt]{-}
\end{tabu}\label{topology patterns statistic}
\end{table}

\begin{figure}[!ht]
\centering
\begin{subfigure}{0.23\textwidth}
    \centering
    \includegraphics[width=2.5cm, height=1.5cm]{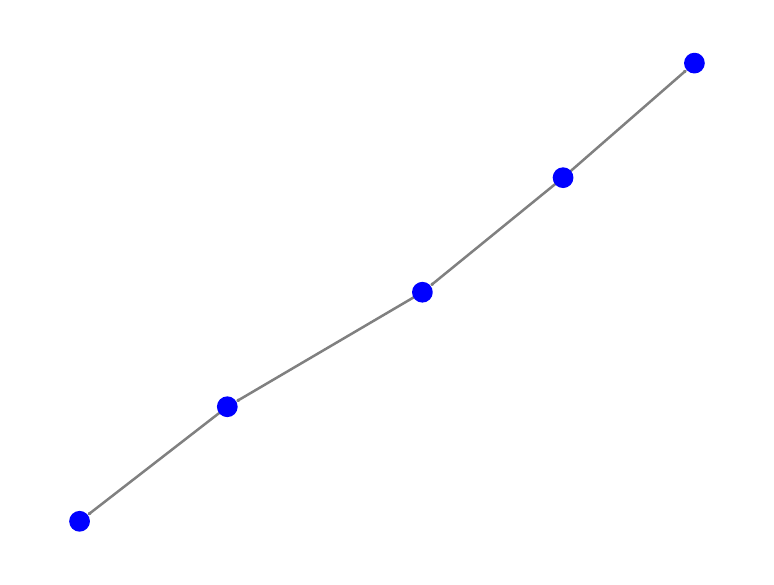}
    \caption{}
\end{subfigure}\hspace{2mm}
\begin{subfigure}{0.23\textwidth}
    \centering
    \includegraphics[width=2.5cm, height=1.5cm]{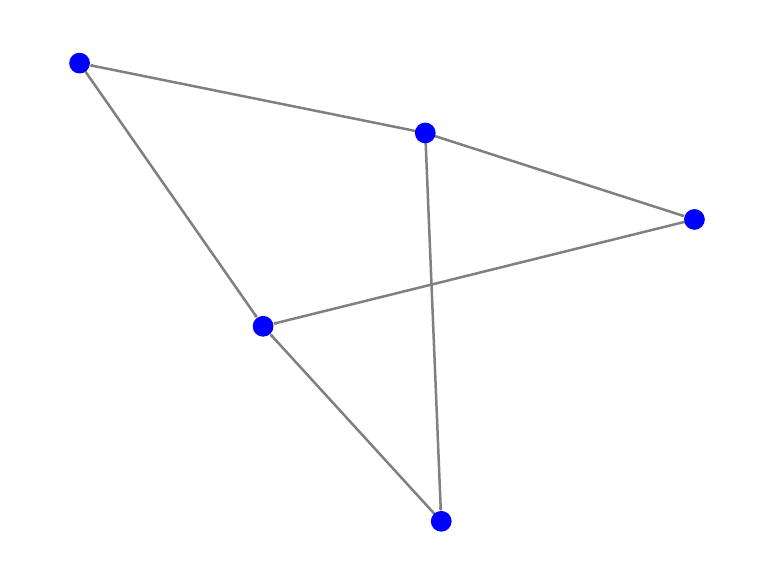}
    \caption{}
\end{subfigure}
\caption{Anomaly group examples of (a) AMLPublic and (b) Ethereum-TSGN.}
\label{example anomaly group}
\end{figure}

\begin{table*}[!h]
\centering
\caption{\centering{Results on all datasets($\pm$ standard error). The best results are highlighted in bold.}}
\small
\resizebox{1.8\columnwidth}{!}{%
\begin{tabular}{cc|c|ccc|cc|c}
\hline
\multicolumn{2}{c|}{\multirow{2}{*}{Datasets}} & \multirow{2}{*}{Metric} & \multicolumn{3}{c|}{N-GAD} & \multicolumn{2}{c|}{Sub-GAD} & Ours \\ \cline{4-9}
\multicolumn{2}{c|}{} & & \multicolumn{1}{c|}{DOMINANT} & \multicolumn{1}{c|}{DeepAE} & ComGA & \multicolumn{1}{c|}{DeepFD} & AS-GAE & TP-GrGAD \\ \hline
\multicolumn{1}{c|}{\multirow{6}{*}{Real-world}} & \multirow{3}{*}{Ethereum-TSGN} & CR & \multicolumn{1}{c|}{0.19±0.01} & \multicolumn{1}{c|}{0.19±0.} & 0.18±0.01 & \multicolumn{1}{c|}{0.27±0.04} & 0.39±0.02 & \textbf{0.81±0.10} \\
\multicolumn{1}{c|}{} & & F1 & \multicolumn{1}{c|}{0.14±0.01} & \multicolumn{1}{c|}{0.14±0.} & 0.14±0.01 & \multicolumn{1}{c|}{0.50±0.09} & 0.44±0.03 & \textbf{0.73±0.04} \\
\multicolumn{1}{c|}{} & & AUC & \multicolumn{1}{c|}{0.51±0.01} & \multicolumn{1}{c|}{0.48±0.} & 0.49±0.01 & \multicolumn{1}{c|}{0.52±0.03} & 0.49±0.03 & \textbf{0.86±0.06} \\ \cline{2-9}
\multicolumn{1}{c|}{} & \multirow{3}{*}{AMLPublic} & CR & \multicolumn{1}{c|}{0.10±0.01} & \multicolumn{1}{c|}{0.10±0.01} & 0.10±0.01 & \multicolumn{1}{c|}{0.32±0.01} & {0.28±0.09} & \textbf{0.89±0.} \\
\multicolumn{1}{c|}{} & & F1 & \multicolumn{1}{c|}{0.37±0.03} & \multicolumn{1}{c|}{0.37±0.06} & 0.38±0.03 & \multicolumn{1}{c|}{0.55±0.08} & 0.39±0.03 & \textbf{0.90±0.} \\
\multicolumn{1}{c|}{} & & AUC & \multicolumn{1}{c|}{0.89±0.} & \multicolumn{1}{c|}{0.87±0.04} & 0.83±0.08 & \multicolumn{1}{c|}{0.56±0.13} & {0.70±0.06} & \textbf{0.85±0.04} \\ \hline
\multicolumn{1}{c|}{\multirow{9}{*}{Synthetic}} & \multirow{3}{*}{simML} & CR & \multicolumn{1}{c|}{0.33±0.01} & \multicolumn{1}{c|}{0.27±0.03} & 0.31±0.04 & \multicolumn{1}{c|}{0.12±0.02} & 0.29±0.01 & \textbf{0.84±0.04} \\
\multicolumn{1}{c|}{} & & F1 & \multicolumn{1}{c|}{0.45±0.02} & \multicolumn{1}{c|}{0.54±0.03} & 0.54±0.06 & \multicolumn{1}{c|}{0.37±0.05} & 0.53±0.04 & \textbf{0.76±0.04} \\
\multicolumn{1}{c|}{} & & AUC & \multicolumn{1}{c|}{0.50±0.04} & \multicolumn{1}{c|}{0.49±0.05} & 0.51±0.05 & \multicolumn{1}{c|}{0.64±0.06} & 0.56±0.06 & \textbf{0.84±0.01} \\ \cline{2-9}
\multicolumn{1}{c|}{} & \multirow{3}{*}{Cora-group} & CR & \multicolumn{1}{c|}{0.13±0.06} & \multicolumn{1}{c|}{0.13±0.04} & 0.13±0.05 & \multicolumn{1}{c|}{0.18±0.01} & 0.16±0.1 & \textbf{0.93±0.2} \\
\multicolumn{1}{c|}{} & & F1 & \multicolumn{1}{c|}{0.43±0.08} & \multicolumn{1}{c|}{0.41±0.03} & 0.42±0.04 & \multicolumn{1}{c|}{0.46±0.07} & 0.60±0.06 & \textbf{0.75±0.02} \\
\multicolumn{1}{c|}{} & & AUC & \multicolumn{1}{c|}{0.53±0.02} & \multicolumn{1}{c|}{0.52±0.02} & 0.53±0.02 & \multicolumn{1}{c|}{0.52±0.05} & 0.55±0.02 & \textbf{0.73±0.02} \\ \cline{2-9}
\multicolumn{1}{c|}{} & \multirow{3}{*}{CiteSeer-group} & CR & \multicolumn{1}{c|}{0.14±0.03} & \multicolumn{1}{c|}{0.14±0.02} & 0.14±0.03 & \multicolumn{1}{c|}{0.26±0.03} & 0.20±0.09 & \textbf{0.72±0.05} \\
\multicolumn{1}{c|}{} & & F1 & \multicolumn{1}{c|}{0.35±0.01} & \multicolumn{1}{c|}{0.34±0.01} & 0.35±0.01 & \multicolumn{1}{c|}{0.52±0.05} & 0.54±0.04 & \textbf{0.85±0.01} \\
\multicolumn{1}{c|}{} & & AUC & \multicolumn{1}{c|}{0.72±0.02} & \multicolumn{1}{c|}{0.74±0.01} & 0.72±0.02 & \multicolumn{1}{c|}{0.52±0.04} & 0.59±0.02 & \textbf{0.87±0.03} \\ \hline
\end{tabular}}\label{experiment results}
\vspace{-0.4cm}
\end{table*}

\subsubsection{Evaluation metrics} 
Unlike N-GAD and Sub-GAD evaluate the performance of classification from the perspective of node level, i.e., how many nodes are classified correctly, we argue the measurements of Gr-GAD shall focus on group-level and cover two folds: \textbf{Detection Accuracy}, i.e., how many groups are classified correctly, and \textbf{Detection Completeness}, i.e., how complete are the detected groups.

For detection accuracy, we use two widely used binary classification evaluation metrics: F1-score and AUC. Note the calculation of these two metrics is group-wise basis. 

Existing studies have neglected detection completeness and lack appropriate evaluation metrics for it. Therefore, we propose a new metric Completeness Ratio (CR) to measure detection completeness. Specifically, given the ground truth anomaly group set $\mathcal{C}$ and predicted anomaly group set $\hat{\mathcal{C}}$, for a ground truth group $c_g=(\mathbf{V}_g, \mathbf{E}_g), c_g\in \mathcal{C}$, its completeness score $s_g$ is:
\begin{equation}\label{completeness score}
    s_{g} = \max_{\hat{c_i}\in\hat{\mathcal{C}}}\frac{1}{2} \cdot (\frac{|\hat{\mathbf{V}_{i}}|\cap|\mathbf{V}_{g}|}{|\mathbf{V}_{g}|} + \frac{|\hat{\mathbf{V}_{i}}|\cap|\mathbf{V}_{g}|}{|\hat{\mathbf{V}_{i}}|}), \hat{c_i}=(\hat{\mathbf{V}_i}, \hat{\mathbf{E}_i}).
\end{equation}
The completeness score $s_g$ is composed of two parts. The first part (the term on the right-hand side of the addition in Eqn.~\eqref{completeness score}) measures the proportion of overlapping nodes between $\hat{c}_i$ and $c_g$, relative to the total number of nodes in $c_g$. In other words, it quantifies what fraction of $c_g$ has been detected or whether $c_g$ has been completely detected. The second part (the term on the left-hand side of the addition in Eqn.~\eqref{completeness score}) measures the proportion of overlapping nodes between $\hat{c}_i$ and $c_g$, relative to the total number of nodes in $\hat{c}_i$. It assesses redundant nodes contained in $\hat{c}_i$.

Then, the CR value is the average completeness scores of all ground truth groups:
\begin{equation}
    CR = \frac{1}{|\mathcal{C}|} \sum_{c_{g}\in \mathcal{C}} s_{g}.
\end{equation}
CR measures the number of missing and redundant nodes in the predicted groups simultaneously. The closer the CR value is to 1, the more complete the prediction group is. 

\subsubsection{Baselines}
The baselines used for comparison include state-of-the-art unsupervised N-GAD and Sub-GAD methods. 
\begin{itemize}[leftmargin=*]
    \item Node-level anomaly detection (N-GAD): DOMINANT~\cite{dominant}, ONE~\cite{ONE} and ComGA \cite{ComGA}, which are based on GAE and reconstruction error assumption to detect anomaly nodes in an unsupervised manner. We generalize them to do the task of Gr-GAD by following the style of AS-GAE \cite{AS-GAE}, sampling groups from their detected anomaly nodes via connected component detection.
    \item Subgraph-level anomaly detection (Sub-GAD): DeepFD \cite{DeepFD} and AS-GAE \cite{AS-GAE}, which locate anomaly nodes in N-GAD style first and extract anomaly subgraphs from the anomaly node set by clustering or connected component detection.
\end{itemize}
\subsubsection{Experiment environment and setup}
For a fair comparison, each method is run on a Linux system with 16 Gen Intel(R) Core(TM) i9-12900F cores and an NVIDIA GeForce RTX 3090. The codes and the parameters used for the comparison are available from the authors' public link or widely-used public implementations~\cite{dominantcode, DeepAEcode, ComGAcode, DeepFDcode, ASGAEcode}. In the experiments, we typically select the top 10\% nodes with the highest reconstruction errors as anchor nodes and sample candidate groups from them. We select 2-layer GCN \cite{GCN} as the encoder of MH-GAE and TPGCL. The TPGCL's output embeddings are 64 dimensional and we employ the state-of-the-art outlier detector named ECOD \cite{ECOD} for our experimental evaluations. More details at our github repository.

\subsection{Performance Comparison}
\label{Exp: performance comparison}

As shown in Table.~\ref{experiment results}, the proposed method in this study exhibits significant superiority over the N-GAD and Sub-GAD methods across all metrics, particularly in terms of the CR metric. Whether applied to real-world or synthetic datasets, both N-GAD and Sub-GAD methods consistently exhibit considerably low CR values, ranging from approximately 0.1 to 0.4. This leads to the ineffective classification of anomalous groups by the N-GAD and Sub-GAD methods, resulting in lower F1 scores and AUC values.

As observed in Sec.~\ref{MH-GAE section}, N-GAD and Sub-GAD methods tend to identify isolated nodes and smaller-sized groups due to their limited capability in capturing long-range inconsistencies, as depicted in Fig.~\ref{fig:group_size}.

\begin{figure}[!ht]
    \centering
    \includegraphics[width=0.5\textwidth, height=0.18\textheight]{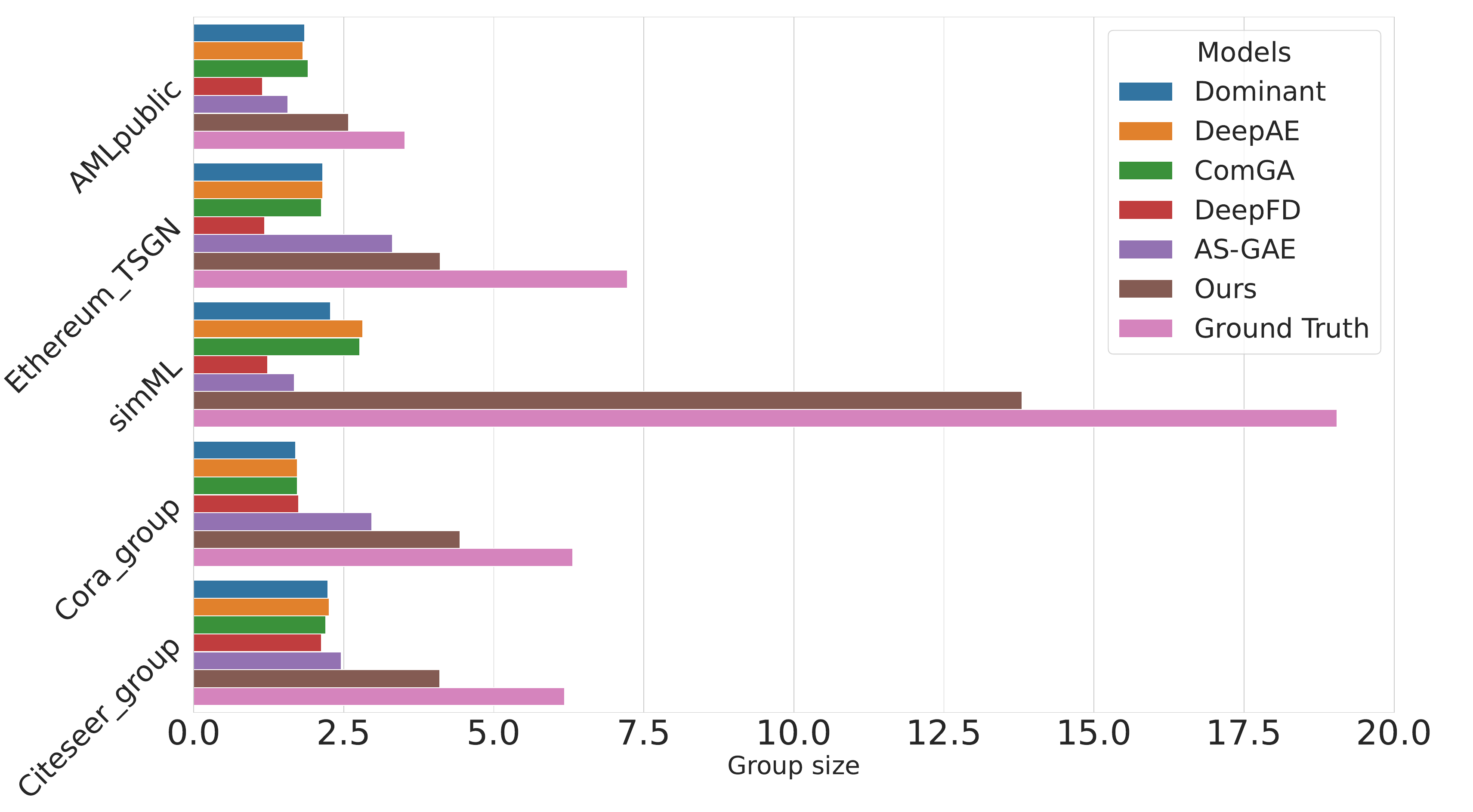}
    \caption{The average size (number of nodes) of the identified anomalous groups for each method on datasets.}
    \label{fig:group_size}
    \vspace{-0.4cm}
\end{figure}

Fig.~\ref{fig:group_size} shows the size of anomaly groups found by N-GAD methods is close to each other and not over 3, which is significantly smaller than the average size of the anomalous groups. For instance, the average size of anomaly groups of datasets except AMLPublic is more than 5. 

Although anomaly groups detected by AS-GAE  have larger sizes than the N-GAD method across all datasets and thus AS-GAE achieves higher CR values, the lower F1 scores and AUC indicate its inability to accurately distinguish anomalous groups. We believe this is due to the AS-GAE's lack of capability in extracting topology pattern information, only aggregating anomaly scores of nodes as the scores of the groups. On the contrary, the proposed framework identifies anomalous graphs with an average size that is closer to the average size of ground truth compared to any other method.

This finding supports N-GAD and Sub-GAD methods are incapable of detecting anomaly groups, thus highlighting the advantages of the proposed framework applied to Gr-GAD.

\begin{figure*}[!t]
\centering
\hfill
\begin{subfigure}{0.19\textwidth}
    \includegraphics[width=\textwidth]{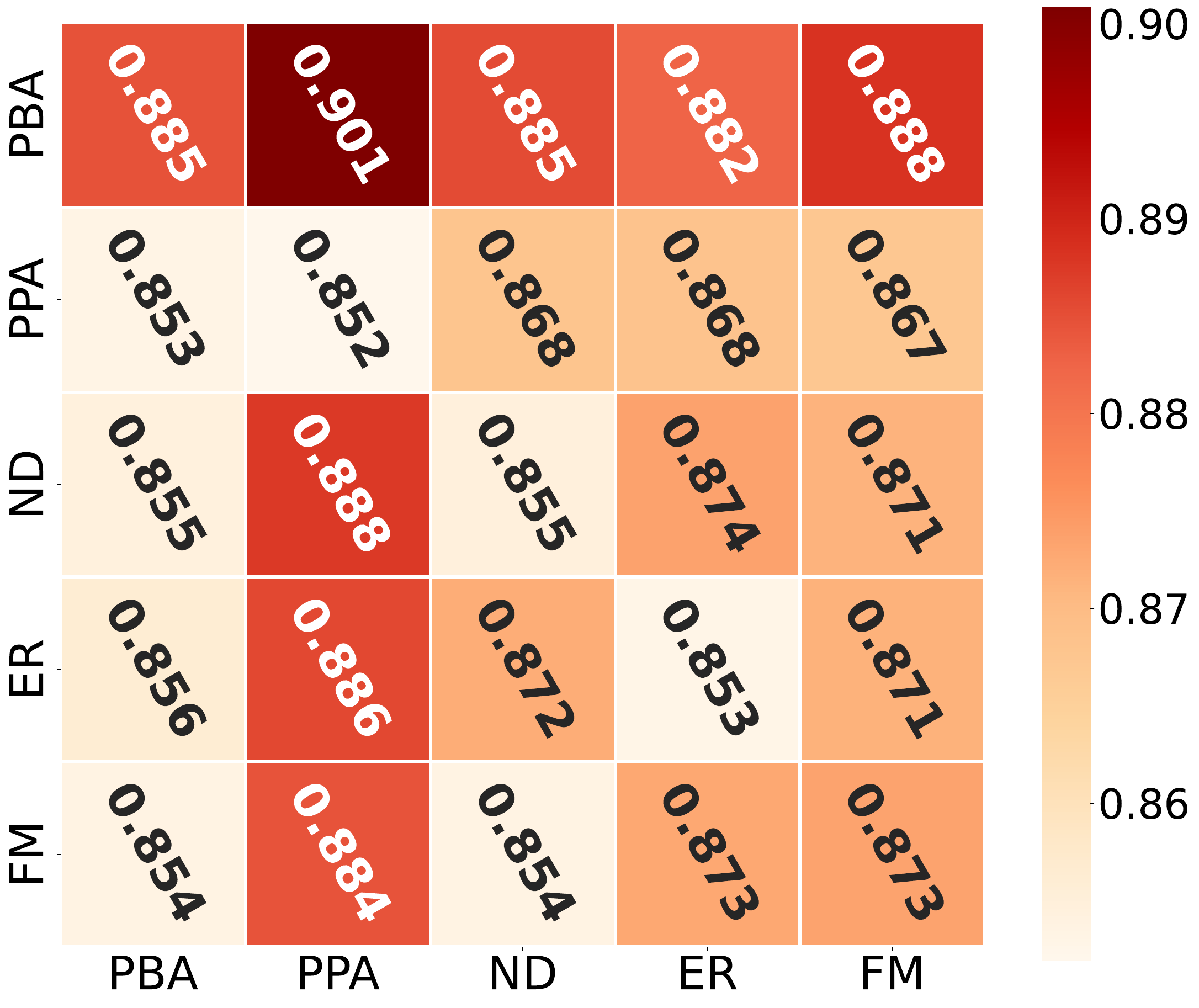}
    \caption{AMLPublic}
    \label{Aug_AMLPublic}
\end{subfigure}
\begin{subfigure}{0.19\textwidth}
    \includegraphics[width=\textwidth]{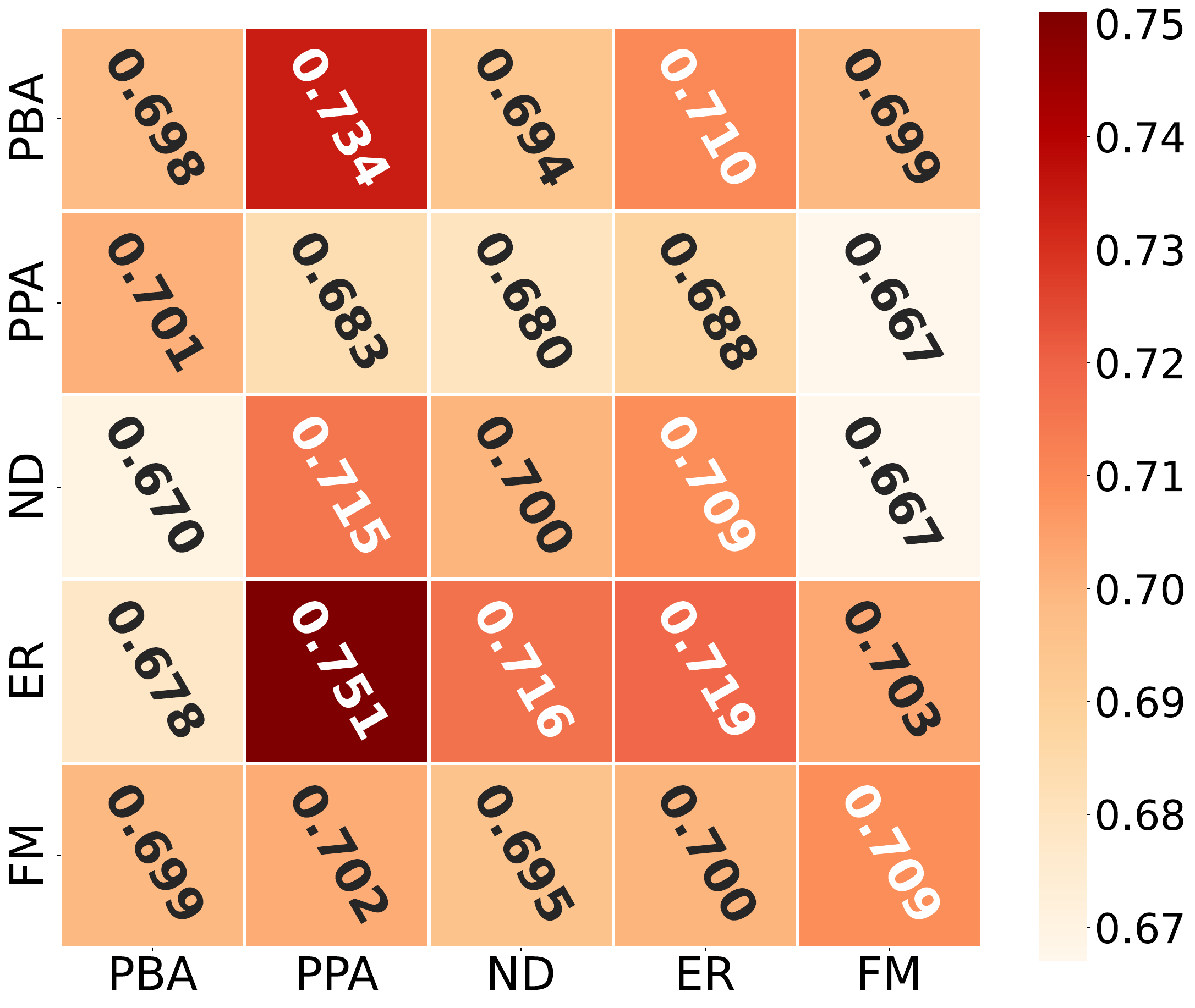}
    \caption{Ethereum-TSGN}
    \label{Aug_eth}
\end{subfigure}
\hfill
\begin{subfigure}{0.19\textwidth}
    \includegraphics[width=\textwidth]{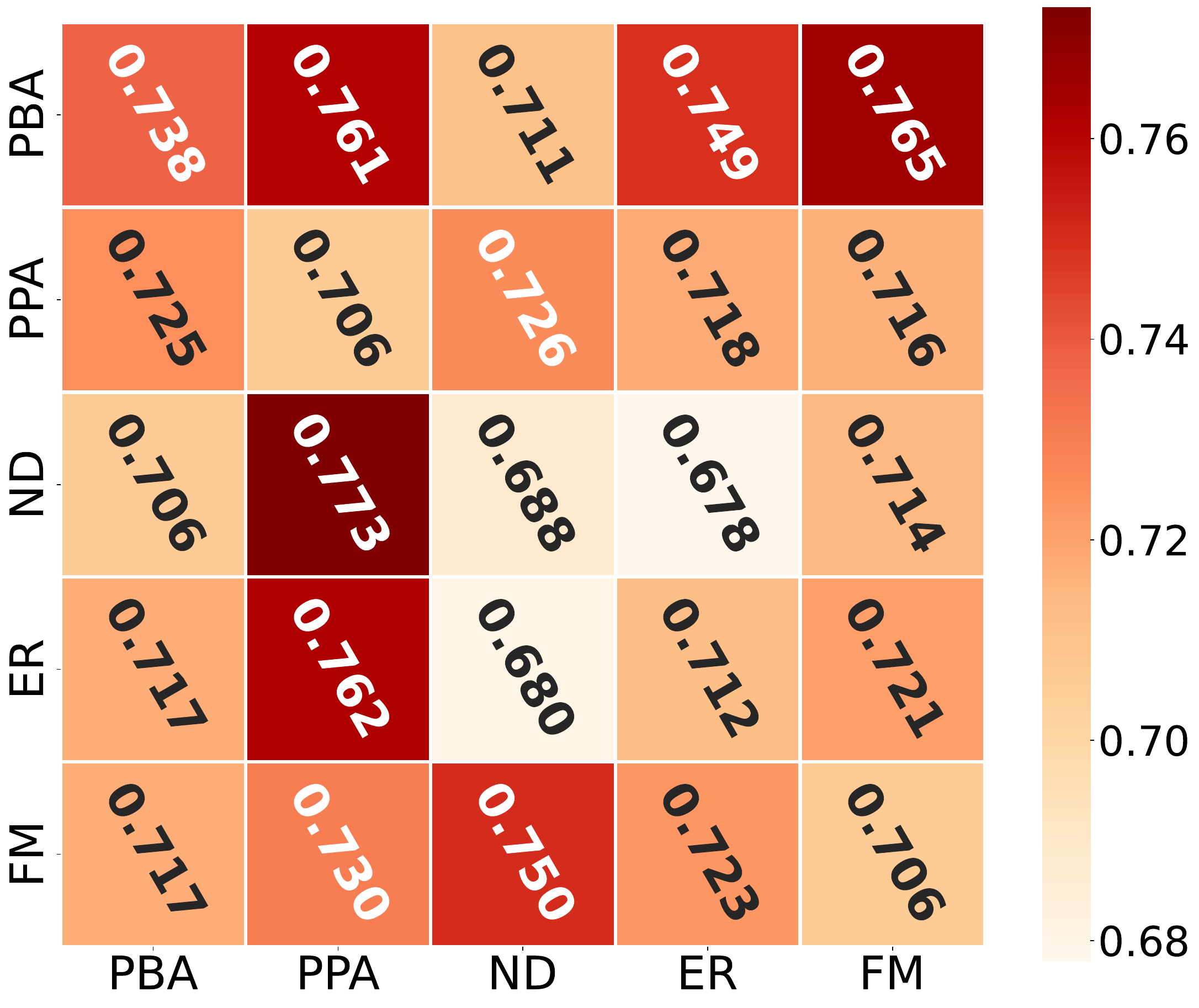}
    \caption{simML}
    \label{Aug_simML}
\end{subfigure}
\hfill
\begin{subfigure}{0.19\textwidth}
    \includegraphics[width=\textwidth]{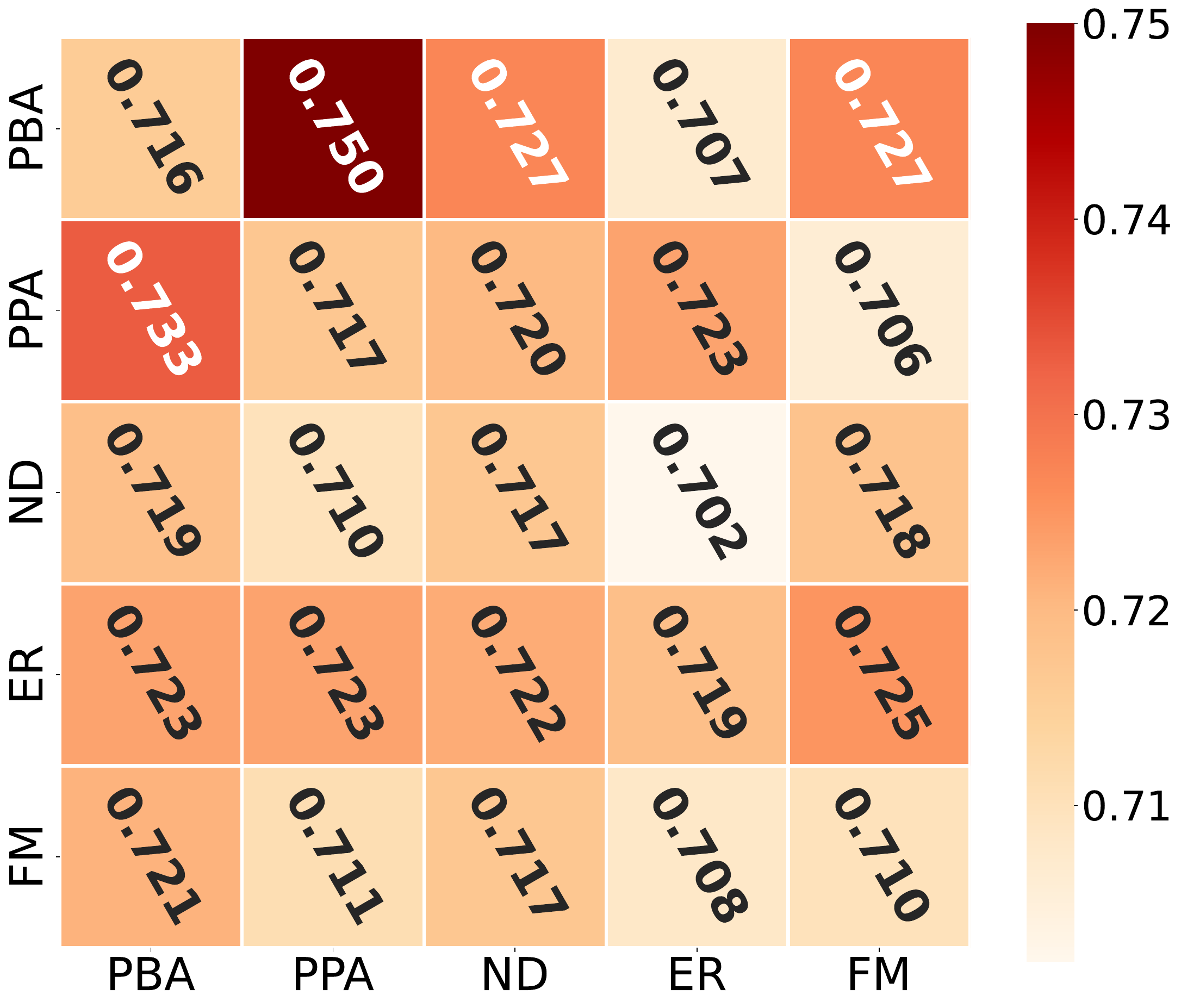}
    \caption{Cora-group}
    \label{Aug_cora}
\end{subfigure}
\hfill
\begin{subfigure}{0.19\textwidth}
    \includegraphics[width=\textwidth]{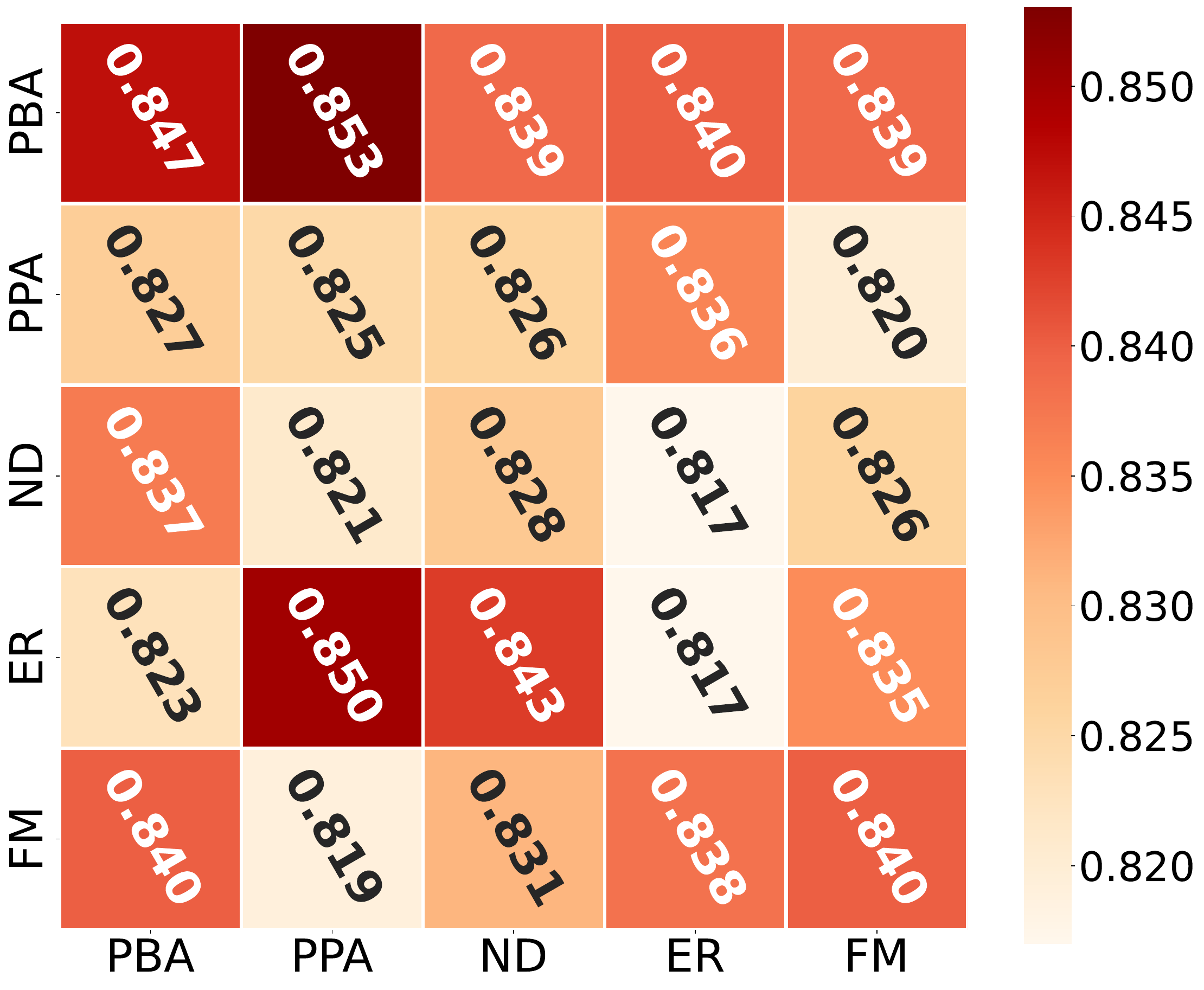}
    \caption{CiteSeer-group}
    \label{Aug_citeseer}
\end{subfigure}
\caption{Comparison of different augmentation combinations.}
\label{augmentationablation}
\end{figure*}

\begin{figure*}[!t]
\centering
\begin{subfigure}{0.19\textwidth}
    \includegraphics[width=\textwidth]{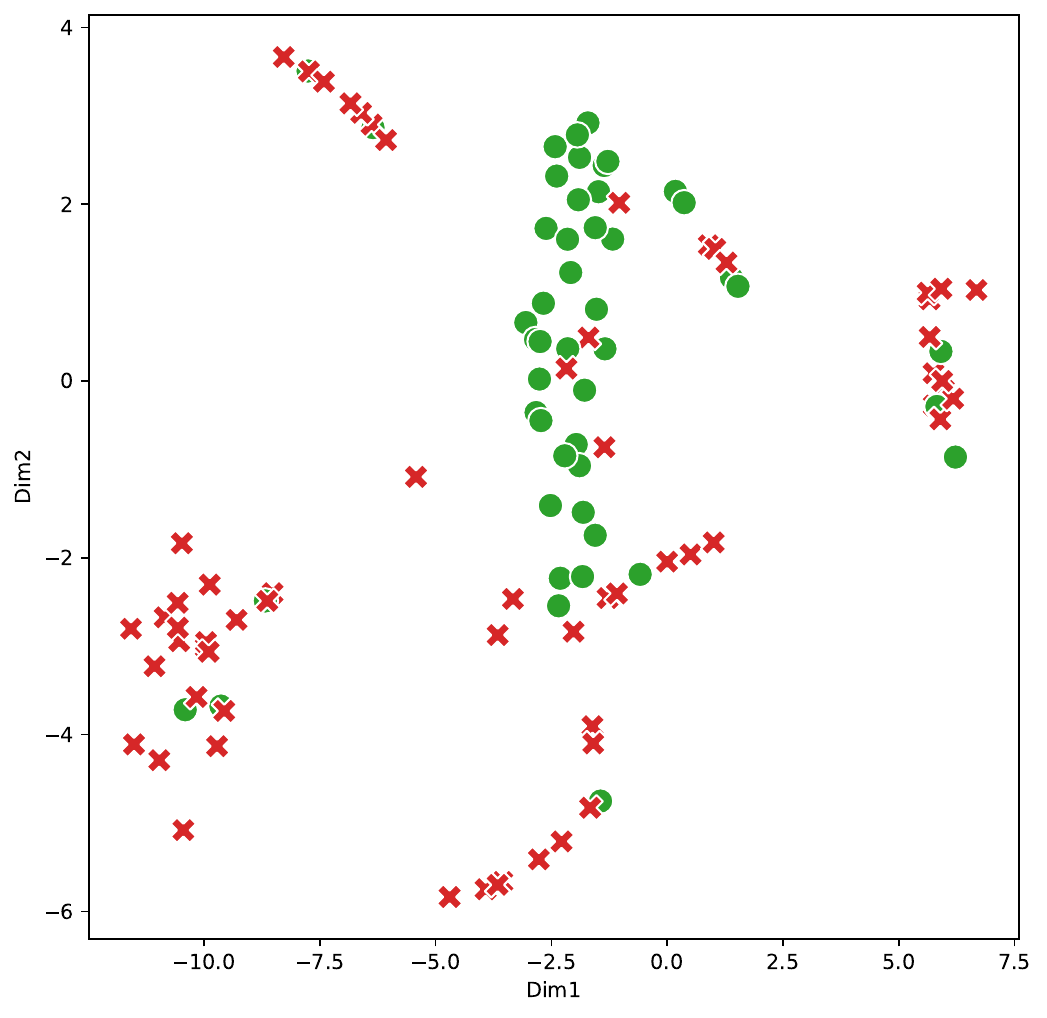}
    \caption{tSNE simML}
    \label{tSNE-simML}
\end{subfigure}
\hfill
\begin{subfigure}{0.19\textwidth}
    \includegraphics[width=\textwidth]{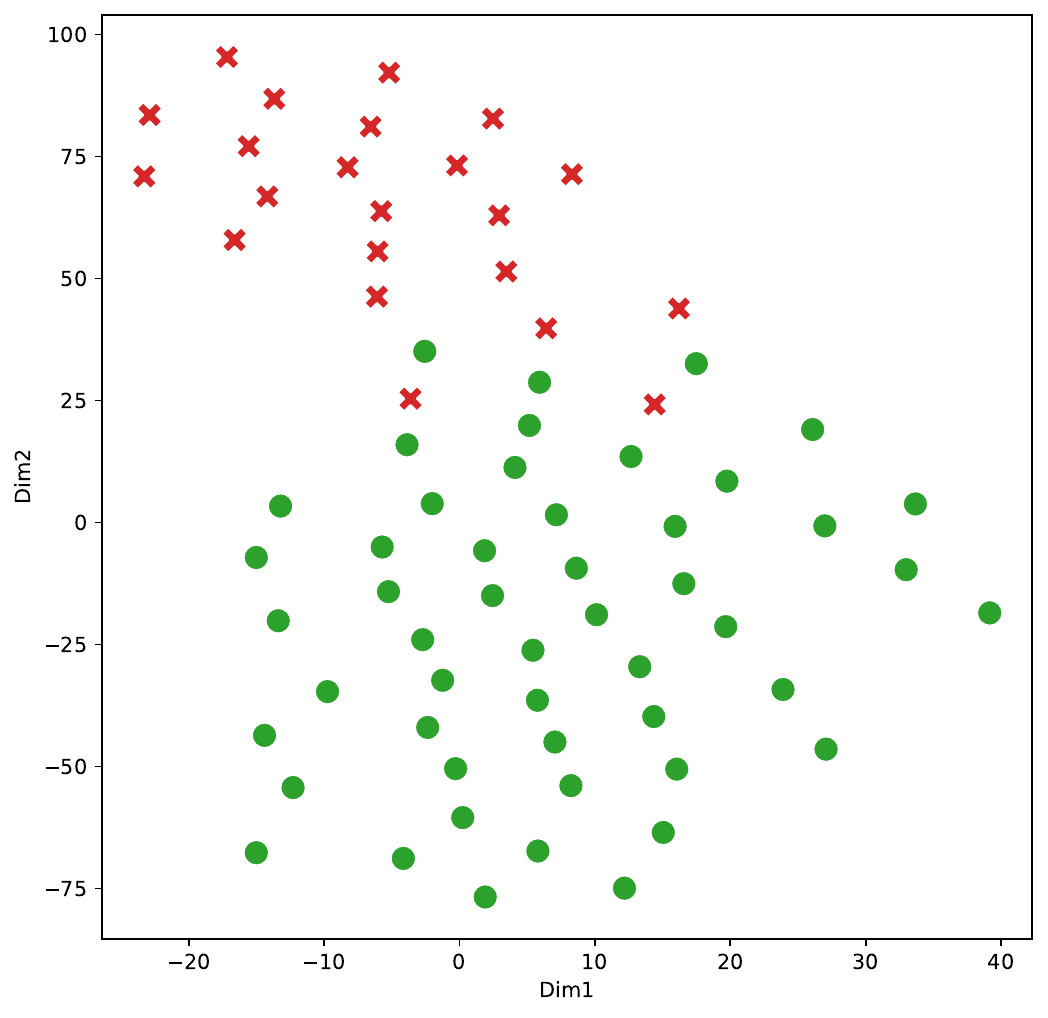}
    \caption{tSNE Cora-g}
    \label{tSNE-Cora}
\end{subfigure}
\hfill
\begin{subfigure}{0.19\textwidth}
    \includegraphics[width=\textwidth]{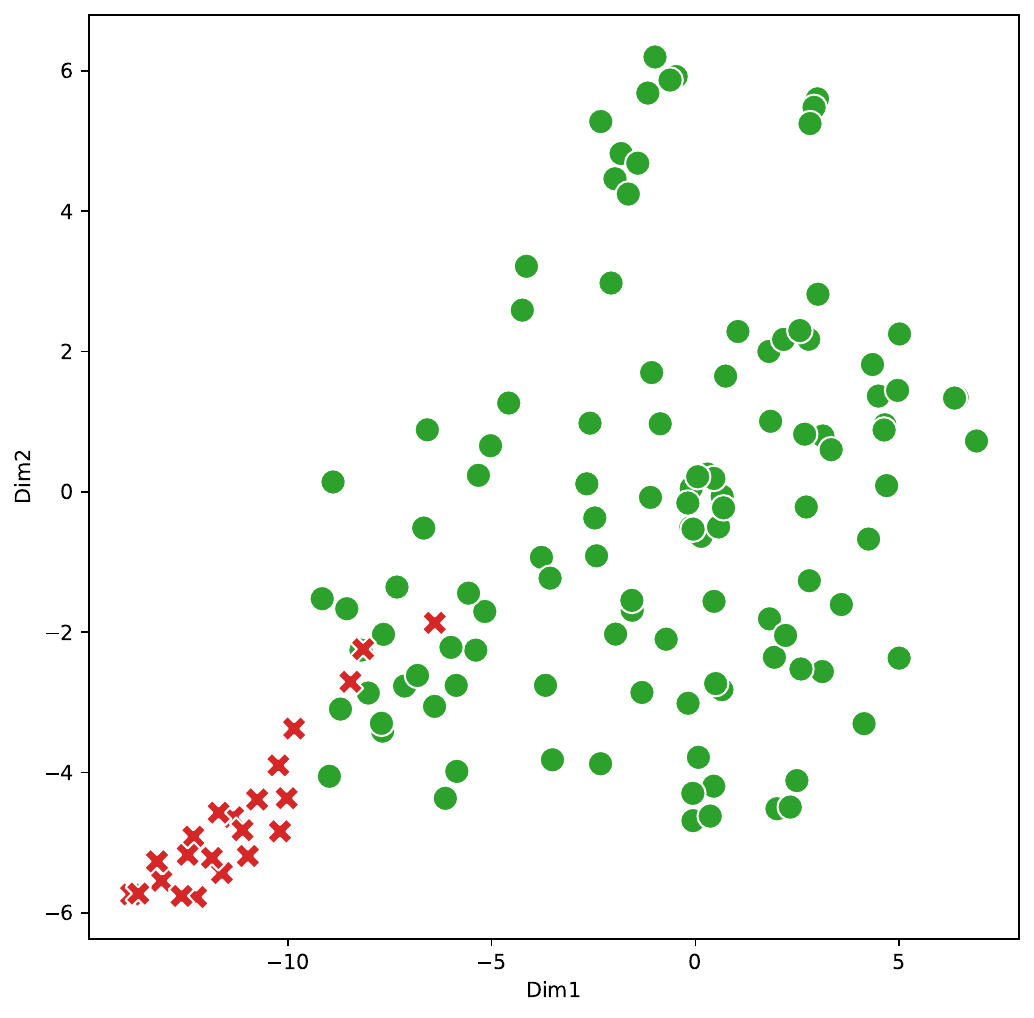}
    \caption{tSNE CiteSeer-g}
    \label{tSNE-Citeseer}
\end{subfigure}
\hfill
\begin{subfigure}{0.19\textwidth}
    \includegraphics[width=\textwidth]{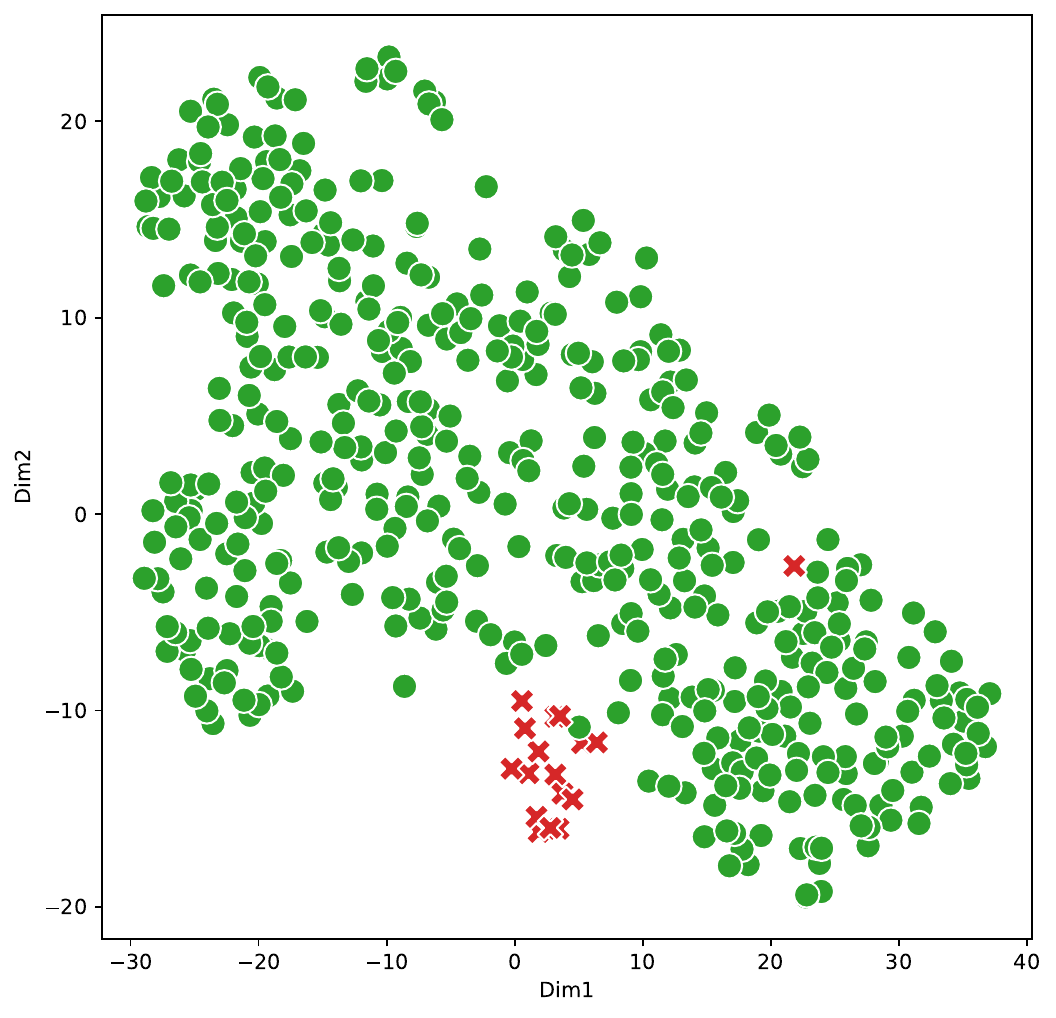}
    \caption{tSNE AMLPublic}
    \label{tSNEAMLPublic}
\end{subfigure}
\hfill
\begin{subfigure}{0.19\textwidth}
    \includegraphics[width=\textwidth]{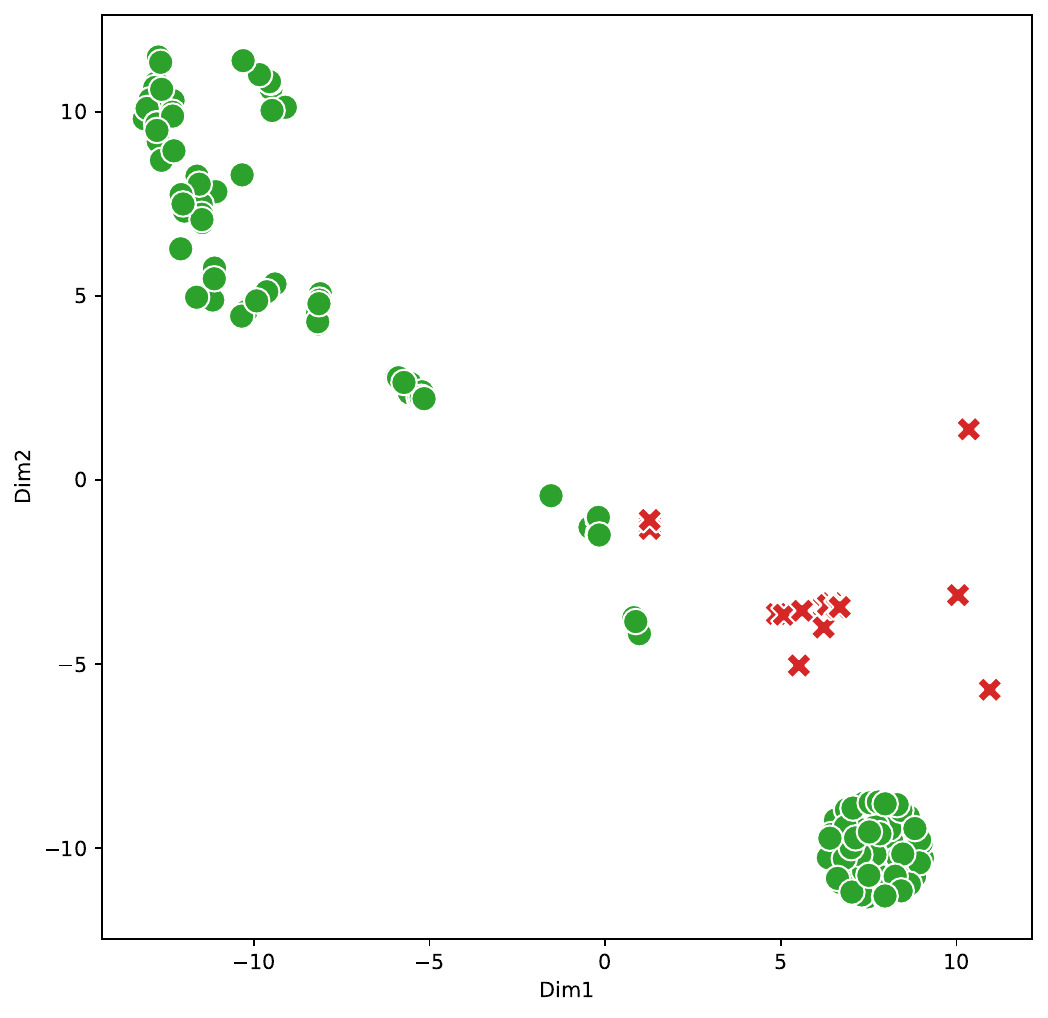}
    \caption{tSNE Eth}
    \label{tSNEEthereumTSGN}
\end{subfigure}
\caption{Visualizations on datasets. Red and green nodes are embeddings of anomaly and normal groups respectively.}
\label{fig:visualization}
\end{figure*}

\begin{figure*}[!t]
    \begin{subfigure}{0.19\textwidth}
        \includegraphics[width=\textwidth]{Pictures/example_graph.pdf}
        \caption{Example Graph}
        \label{example graph1}
    \end{subfigure}
    \hfill
    \begin{subfigure}{0.19\textwidth}
        \includegraphics[width=\textwidth]{Pictures/dominant_example1.pdf}
        \caption{DOMINANT}
        \label{Dominant example1}
    \end{subfigure}
    \hfill
    \begin{subfigure}{0.19\textwidth}
        \includegraphics[width=\textwidth]{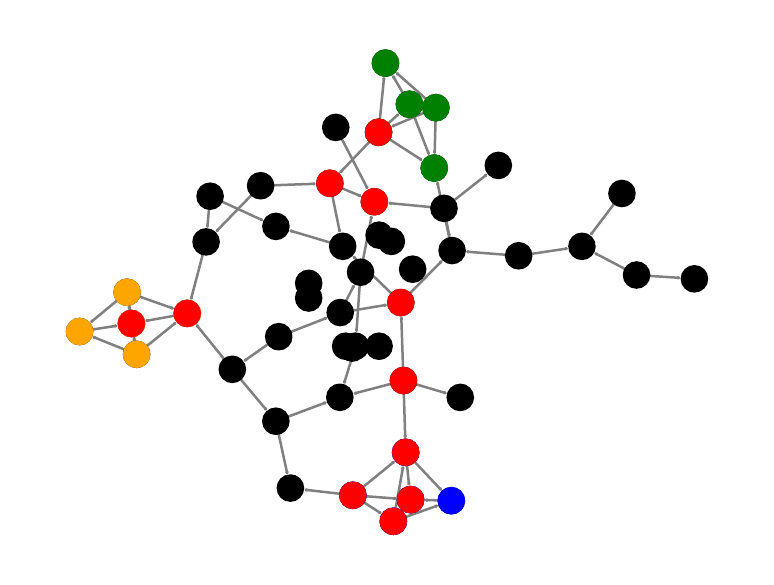}
        \caption{DeepAE}
        \label{DeepAE example1}
    \end{subfigure}
    \hfill
    \begin{subfigure}{0.19\textwidth}
        \includegraphics[width=\textwidth]{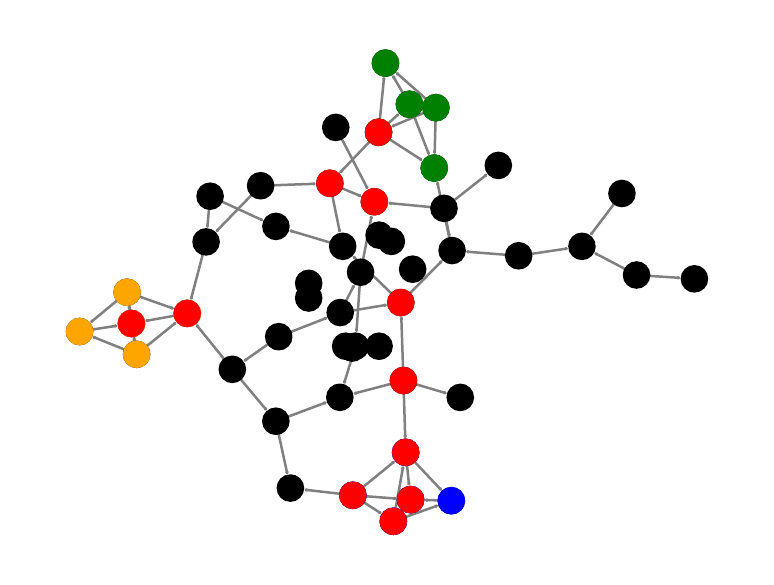}
        \caption{ComGA}
        \label{ComGA example1}
    \end{subfigure}
    \hfill
    \begin{subfigure}{0.19\textwidth}
        \includegraphics[width=\textwidth]{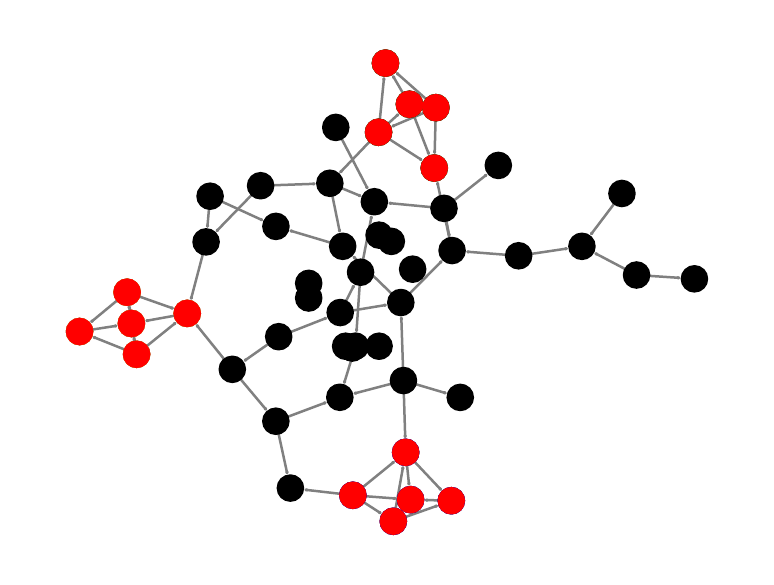}
        \caption{MH\-GAE}
        \label{MHGAE example}
    \end{subfigure}
\label{fig:example}
\caption{Performance comparison of GAE-based anomaly detection methods on example graph.}
\vspace{-0.4cm}
\end{figure*}

\subsection{Ablation Studies}
\label{Exp: ablation studies}

To demonstrate the effectiveness of the proposed MH-GAE and TPGCL, we design the following ablation studies. 

\subsubsection{Ablation study of MH-GAE}

To verify MH-GAE's ability to capture long-range inconsistency, we compare performances of various optimization objects by replacing adjacent matrix $A$ with different matrics and collect the proposed method's CR scores under different matrics as shown in Table.~\ref{Ablation study of MH-GAE}. 

\begin{table}[!h]
\centering
\caption{Comparison of matrix for MH-GAE.}
\label{Ablation study of MH-GAE}
\begin{tabular}{@{}c|c|c|c|c|c@{}}
\toprule[0.6mm]
Matrix & $A$ & $A^3$ & $A^5$ & $A^7$ & $\tilde{A}$ \\ \midrule
Ethereum-TSGN & 0.693 & 0.692 & \underline{$0.759$} & 0.735 & \bm{$0.81$} \\ \midrule
AMLPublic & 0.851 & 0.860 & 0.854 & \underline{$0.888$} & \bm{$0.890$} \\ \midrule
simML & 0.831 & 0.826 & \bm{$0.842$} & 0.835 & \underline{$0.839$} \\ \midrule
Cora-group & 0.840 & 0.870 & \underline{$0.920$} & 0.90 & \bm{$0.933$} \\ \midrule
CiteSeer-group & 0.692 & 0.689 & \bm{$0.747$} & 0.702 & \underline{$0.724$} \\ \bottomrule[0.6mm]
\end{tabular}
\end{table}

\begin{table}[!h]
\centering
\caption{Ablation study of TPGCL.}
\label{Ablation study of TPGCL}
\begin{tabular}{@{}c|c|c@{}}
\toprule[0.6mm]
Dataset & TP-GrGAD w/o TPGCL & TP-GrGAD \\ \midrule
Ethereum-TSGN & 0.402±0.03 & 0.734±0.04 \\ \midrule
AMLPublic & 0.639±0. & 0.901±0.  \\ \midrule
simML & 0.426±0.01 & 0.761±0.04 \\ \midrule
Cora-group & 0.578±0.01 & 0.750±0.02 \\ \midrule
CiteSeer-group & 0.675±0.01 & 0.853±0.01 \\ \bottomrule[0.6mm]
\end{tabular}
\vspace{-0.3cm}
\end{table}

The results are shown in Table.~\ref{Ablation study of MH-GAE} indicate MH-GAE always achieves the worst performance when the objective matrix is $A$ or $A^3$. Since MH-GAE only captures inconsistency within one- or three hops, which is obviously not enough to measure group-level anomaly. Conversely, $A^5$, $A^7$ and $\tilde{A}$ can help improve the performance of MH-GAE and achieve the best and second-best performance. This observation demonstrates the significance of capturing long-range inconsistency for Gr-GAD.

\subsubsection{Ablation study of PPA and PBA}

We compare the proposed PPA and PBA with the three most commonly used augmentations: \textbf{Node Dropping (ND)}, \textbf{Edge Removing (ER)}, and \textbf{Feature Masking (FM)}. Note ND and ER perturb the structures of candidate groups by removing nodes and edges, which will damage the intrinsic topology patterns.

\subsubsection{Ablation study of TPGCL}

To validate the effectiveness of TPGCL, we remove the TPGCL component from the framework and directly input candidate groups into the unsupervised outlier detection method \cite{ECOD} to obtain anomaly scores and evaluate the performance. Each candidate group was represented as a feature vector, obtained by averaging the feature vectors of each node within the candidate group. We compared the F1-score of these two variants of our proposed framework and presented the results in Table.~\ref{Ablation study of TPGCL}. The table indicates that after removing the TPGCL component, the performance of TP-GrGAD significantly deteriorates, and it becomes ineffective in distinguishing anomalous groups. This finding suggests that the embeddings generated by TPGCL, which contain topology pattern information, are crucial for effectively distinguishing anomalous groups.

\subsection{Visualization}
\label{Exp:visualization}

To demonstrate TPGCL can provide well embeddings for the input candidate groups for discrimination, we use t-SNE \cite{tSNE} to visualize the embeddings learned by TPGCL and show the result in Fig.~\ref{fig:visualization}. Despite a few failed cases, most embeddings belonging to the same class tend to be clustering, which means learned embeddings are high quality and easy to distinguish. For example, as shown in Fig.~\ref{tSNE-Citeseer}, anomaly groups' embeddings (red ``x") are close to each other and far away from normal groups' embeddings (green nodes).  A similar situation can also be observed in  Fig.~\ref{tSNEAMLPublic}. 

To demonstrate MH-GAE's ability to capture long-range inconsistency, we generate an example graph shown in Fig.~\ref{example graph1}, which contains three anomaly groups highlighted in blue, green, and orange colors. We employ DOMINANT, DeepAE, ComGA, and MH-GAE in this example graph and visualize their predicted results, highlighted in red color. In Fig.~\ref{Dominant example1}, Fig.~\ref{DeepAE example1}, Fig.~\ref{ComGA example1}, some connected components with sizes ranging from 1 to 5 are detected, while nodes hidden deep within the anomalous groups remain undetected, thus validating our analysis about N-GAD methods lack the capability to capture long-range inconsistency, consequently impacting Gr-GAD. Conversely, in Fig.~\ref{MHGAE example}, MH-GAE can successfully capture the entire anomalous group, showcasing its superiority in Gr-GAD.

\section{Conclusion}\label{Conclusion}
In this paper, we addressed the challenging task of Group-level Graph Anomaly Detection (Gr-GAD). We indicate this new task requires anomaly detection methods capable of capturing long-range inconsistency and topology patterns information. To meet these requirements, we introduce a novel unsupervised framework comprising a new variant of Graph AutoEncoder (GAE) called Multi-Hop Graph AutoEncoder (MH-GAE) and a novel Graph Convolutional Learning (GCL) method named Topology Pattern-based Graph Contrastive Learning (TPGCL). Additionally, we provide theoretical proof, from the perspective of mutual information and Graph Information Bottleneck (GIB), to demonstrate that topology patterns can serve as clues for detecting anomalous groups. Experimental results on both real-world and synthetic datasets validate the effectiveness and superiority of our proposed framework.

\newpage
\bibliographystyle{IEEEtranN}
\bibliography{ref}
\end{document}